\documentclass[11pt]{phai}

\usepackage{amsthm}
\usepackage{enumitem}
\usepackage{float}
\usepackage{placeins}
\usepackage{xurl}
\usepackage{xspace}

\setlist[itemize]{leftmargin=1.4em,itemsep=0.25em,topsep=0.25em}
\newcommand{\method}{\textsc{JEPA}-Anything\xspace}
\newcommand{\R}{\mathbb{R}}
\newtheorem{proposition}{Proposition}
\newtheorem{corollary}{Corollary}

\title{JEPA-Anything: Learning Predictive Models across Different Worlds}

\authors{%
{\fontsize{10.2}{13}\selectfont
\mbox{Taoyong Cui\aff{1,2}}\quad
\mbox{Zhongyao Wang\aff{3}}\quad
\mbox{Xinyue Xu\aff{2}}\quad
\mbox{Weiyang Liu\aff{2}}\quad
\mbox{Zhaochen Yu\aff{1}}\quad
\mbox{Yuying Zhang\aff{4}}\quad
\mbox{Qiang Gao\aff{3}}\quad
\mbox{Mengyue Yang\aff{5}}\quad
\mbox{Wanli Ouyang\aff{2}}\quad
\mbox{Pheng Ann Heng\aff{2,*}}\\
\mbox{Yingcheng Wu\aff{1,6*}}\quad
\mbox{Zhenfei Yin\aff{1,7*}}\quad
\mbox{Ling Yang\aff{1,8*}}%
\par}
\vspace{0.8em}
{\normalsize
\setlength{\fboxsep}{6pt}%
\colorbox{blue!5}{%
  \hspace{8pt}%
  \href{https://github.com/Gen-Verse/JEPA-Anything}{%
    \textcolor{blue!65!black}{\textbf{Code: JEPA-Anything}}}%
  \hspace{14pt}\textcolor{black!30}{|}\hspace{14pt}%
  \href{https://huggingface.co/collections/Gen-Verse/jepa-anything}{%
    \textcolor{blue!65!black}{\textbf{Models: Jepa-Anything}}}%
  \hspace{8pt}%
}%
\par}
}

\legend{* Corresponding authors. Organizations and contact details are listed
in Appendix~\ref{app:author-information}.}

\runningtitle{JEPA-Anything}

\checkdata[Main Contact]{wuyc@phai-labs.com;
yin@phai-labs.com; yang@phai-labs.com}

\begin{document}
\maketitle

\begin{abstract}
World modeling enables intelligence to anticipate consequences, guide interventions, and learn from interaction. Yet predictive models remain domain-specific: \emph{can a common learning principle support world modeling across radically different systems?} We introduce \textbf{\method}, a domain-agnostic framework based on \emph{orthogonal predictive factorization} (OPF). Extending joint-embedding predictive architectures, OPF decomposes latent targets into complementary factors, learns them through dedicated pathways, and recombines them within a shared predictive design. We evaluate \method across \textbf{seven domains}: vision, biology, clinical trajectories, control, molecular dynamics, physical fields, and weather. Experiments span representation learning, intervention prediction, out-of-distribution generalization, and long-horizon dynamics, including \textbf{10 matched dynamics tasks}, forecasting of \textbf{over 1,000 clinical events}, and \textbf{100-step molecular rollouts across four systems}. Against matched JEPA baselines, \method improves reported metrics on all 10 dynamics tasks and reduces single-intervention prediction error on Interventional Pong by \textbf{34.8\%}. It achieves the lowest one-step and 100-step molecular errors among compared methods in all four systems. Beyond prediction, a factor-nominated biological intervention receives experimental support in cell co-cultures, patient-derived organoids, tumor fragments, and mice; latent orbital modes recover the Keplerian scaling exponent with a fitted slope of \textbf{$-1.4991$}. These results support a common factorized predictive principle across heterogeneous worlds, connecting world modeling with intervention and experimentally grounded scientific discovery.

\end{abstract}


\begin{figure}[p]
    \centering
    \includegraphics[
        width=1.0\linewidth,
        height=0.87\textheight,
        keepaspectratio
    ]{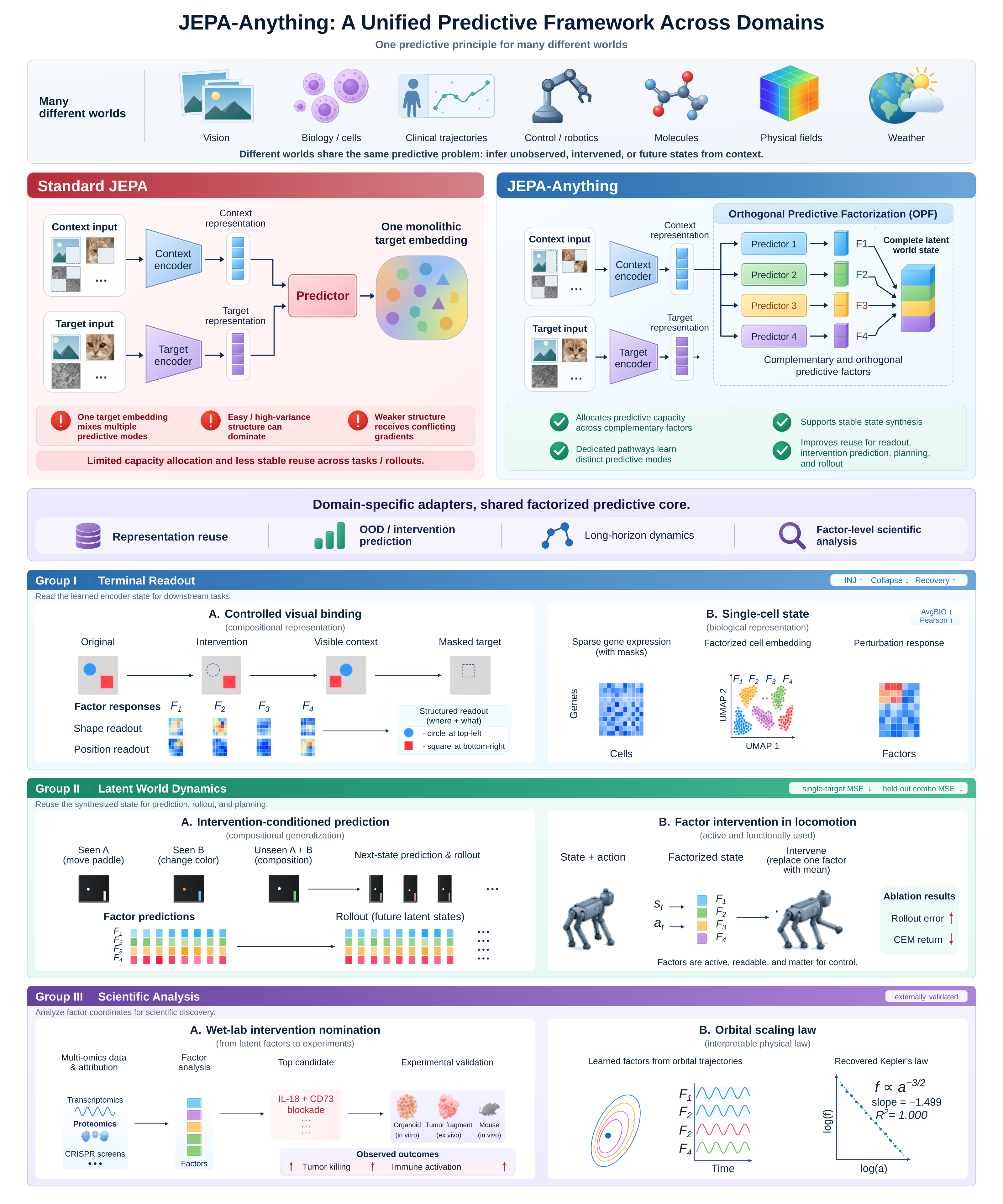}
    \caption{
        \textbf{Overview of \method.}
        \textbf{Upper section:}
        Diverse domains share the same predictive problem: inferring
        unobserved, intervened, or future states from context.
        Standard JEPA mixes multiple predictive modes in a monolithic
        target embedding. In contrast, \method uses orthogonal predictive
        factorization (OPF) to allocate predictive capacity across
        complementary, orthogonal factors, each with a dedicated predictor,
        and synthesize a complete latent world state.
        The shared factorized core supports representation reuse,
        out-of-distribution and intervention prediction, long-horizon
        dynamics, and factor-level scientific analysis.
        \textbf{Lower section:}
        Three evaluation groups illustrate these capabilities.
        Group~I (terminal readout) examines controlled visual binding
        and single-cell state representations.
        Group~II (latent world dynamics) evaluates intervention-conditioned
        prediction and factor intervention in locomotion.
        Group~III (scientific analysis) investigates wet-lab intervention
        nomination and the recovery of an orbital scaling law.
        Within each group, experiments A and B are arranged from left
        to right.
    }
    \label{fig:framework}
\end{figure}

\section{Introduction}

World models learn internal states that allow an agent or scientific model to
anticipate unobserved consequences of an observed situation. Early latent world
models compressed observations and learned recurrent dynamics for control
\cite{ha2018worldmodels}; more recent systems have shown that learned latent
dynamics can support planning across diverse control domains
\cite{hafner2025worldmodels}. Across these formulations, the central object is
a predictive state that retains the information needed to anticipate how the
modeled system can change.

This perspective extends beyond action-conditioned control. A predictive state
may summarize a physical configuration, a patient history, a molecular
trajectory, a partially observed scene, or a cellular profile. The requested
target may be a future state, a hidden spatial region, another view, or a more
complete observation of the same system. We therefore use \emph{latent world
model} in a broad but operational sense: a model that constructs a latent state
from context and uses it to predict another state of the same underlying world.
A meaningful context--target relation links the observed context to another
state of the same system, and the learned state supports reuse across target
queries, task-specific readouts, or repeated state transitions.

Within this shared interface, each system retains its own predictive structure.
A visual state may combine location, object, and
transformation; a biological state may combine cell identity and response; a
physical state may combine entities, scales, and dynamical modes. The number,
scale, and difficulty of these predictable components vary across domains, and
different downstream tasks may reuse different mixtures of them. Some targets
are relatively focused, such as a local masked region or the short-term effect
of a single intervention. Others, such as an unseen combination of
interventions, a future patient state supporting many event risks, or a full
weather field or molecular configuration, involve multiple entities, variables,
and scales with unequal predictability.

Joint-embedding predictive architectures (JEPAs) provide a natural mechanism
for learning such states~\cite{assran2023ijepa,bardes2024vjepa}. A context encoder summarizes what is observed, a
target encoder defines the state to be predicted, and a predictor maps between
the two in representation space. Latent prediction allows the model to focus on
shared, predictable structure while abstracting away noise, local texture, and
other raw-space details that may be irrelevant to downstream use. Reconstruction
can overemphasize high-variance directions that are weakly aligned with
perceptual usefulness \cite{balestriero2024reconstruction}, whereas the learned
target encoder determines the abstraction level of a JEPA target. This view also
connects latent prediction to energy-based learning
\cite{lecun2006energy}, learned similarity metrics
\cite{chopra2005similarity,hadsell2006drlim}, masked image prediction
\cite{assran2023ijepa}, and video representation learning
\cite{bardes2024vjepa}.

JEPA supplies the common latent-prediction mechanism, but the usual formulation
still represents the requested world state through one target embedding and one
prediction pathway. When local and global structure, multiple entities, or
changes at different scales share this monolithic target, easily predicted or
high-variance structure may dominate optimization, several latent directions
may serve similar roles, and weaker predictive structure may receive
conflicting gradients. This makes latent world-state design a predictive
capacity-allocation problem: the shared interface should remain fixed while the
state can be organized into multiple complementary components.

\begin{quote}
\emph{Can one latent world-model interface organize the differently structured
predictive states of many context--target systems through multiple factors?}
\end{quote}

We answer this question with \method, a latent world-modeling framework based on
\emph{orthogonal predictive factorization}. A collection of learned basis
matrices analyzes each target state into multiple components, and a dedicated
branch predicts each component from the shared context representation.
Within-factor and cross-factor orthogonality objectives allocate different
directions of the target space to different branches. Factor-activity
regularization maintains variation in projected targets, while an online
variance term discourages collapse of the trainable encoder. The predicted
components can be synthesized into a complete latent state for decoding,
planning, or autoregressive rollout. Factor identities emerge from
predictability, and the factors jointly shape the complete online state.
Ordinary downstream tasks read from that encoder state,
whereas operational transitions and factor-level analyses retain and combine
the explicit factor coordinates. The number and width of factor blocks can be
configured to the predictive complexity of an instantiation under the same
learning principle and latent world-state interface.

The word \emph{Anything} denotes the breadth of the latent world-state
interface. Each domain defines its observation tokens, context--target views,
structural descriptors, and encoder; once the context and target states are
available, the same factorized predictive core applies. Temporal conditioning,
masking, and partial observation instantiate future, spatial, biological, and
other structured targets. Adapters define observation geometry, OPF organizes
predictive capacity, and readout, synthesis, or factor analysis determines how
the learned state is used.

We evaluate this argument at three levels. First, visual binding and single-cell
tasks read the online encoder state, while longitudinal health reads a one-step
synthesized future state; in all three, evaluation ends at a task-specific
readout. Second,
intervention-conditioned sequences, control, molecular dynamics, physical
fields, and weather test whether the synthesized state supports compositional
prediction and repeated transition. Third, biological and orbital analyses test
whether the factor coordinates provide a reusable diagnostic interface. The
results improve readout quality across vision, cells, and disease forecasting;
reduce intervention-prediction error on both observed and unseen combinations;
improve the reported metrics across the ten-task matched dynamics benchmark and
all four molecular systems; and connect learned coordinates to wet-lab evidence
and a known physical scaling law.

Figure~\ref{fig:framework} summarizes the shared predictive architecture and
the three evaluation modes used throughout the paper.

Our contributions are summarized as follows:
\begin{itemize}[itemsep=0.1em,topsep=0.2em,parsep=0pt,partopsep=0pt]
    \item We define latent world modeling through a common context--target
    state interface: domain adapters handle observation geometry, while a
    modality-independent core organizes the resulting predictive states.
    \item We introduce orthogonal predictive factorization, which partitions a
    latent target into learned subspaces with dedicated predictors, providing
    configurable predictive capacity and a complete state for reuse.
    \item We combine factorized prediction with within- and cross-subspace
    orthogonality, factor activity, and online encoder variance, yielding a
    common interface for stable state synthesis and factor-level diagnostics.
    \item We instantiate the same core across physical-field, weather,
    molecular, biological, visual, clinical, and control systems, evaluating
    readout, OOD forecasting, planning, rollout stability, and scientific
    diagnostics.
\end{itemize}

\section{JEPA-Anything}
\label{sec:method}

\subsection{One Interface, One Predictive Core}

The unifying object in \method{} is a latent world-state interface: structured
observations specify an available context, a requested target, and any target
descriptors. The top row of Figure~\ref{fig:framework} shows this shared
architecture. Adapters expose a domain's predictive state, while OPF distributes
its capacity across complementary coordinates, allowing observation geometry
and predictive complexity to vary without changing the interface.

Let $\delta$ index a domain and let $x\sim\mathcal{D}_{\delta}$ be a raw
observation. A domain adapter $\mathcal{A}_{\delta}$ maps $x$ into content tokens
$H=\{h_i\}_{i\in\Omega}$ and structural descriptors
$S=\{s_i\}_{i\in\Omega}$ over an index set $\Omega$. A descriptor may encode a
patch coordinate, time stamp, graph position, entity identity, or may be empty
when no additional structure is needed. A view sampler
$\mathcal{V}_{\delta}$ selects context indices $C\subset\Omega$ and target
indices $T\subset\Omega$. This gives the common interface
\begin{equation}
    x\xrightarrow{\mathcal{A}_{\delta}}(H,S)
    \xrightarrow{\mathcal{V}_{\delta}}(H_C,S_C,T,S_T).
    \label{eq:interface}
\end{equation}
The semantics of $\mathcal{A}_{\delta}$ and $\mathcal{V}_{\delta}$ are
domain-specific; everything after Equation~\eqref{eq:interface} follows one
algorithm.

An online encoder produces a context representation
$z_c=f_\theta(H_C,S_C)$. A target encoder produces a latent target
$z_t=f_{\bar\theta}(H,S)_t\in\R^d$ for every $t\in T$. Using a full target view
includes the familiar JEPA case in which target tokens are selected after
encoding. The target parameters are updated as an exponential moving average,
\begin{equation}
    \bar\theta \leftarrow m\bar\theta+(1-m)\theta,
    \qquad 0\leq m<1,
\end{equation}
and receive no gradients. The single-vector formulation is recovered when
$|T|=1$.

Table~\ref{tab:invariance} states the boundary precisely. Hyperparameters such
as latent width or number of factors may be selected for computational scale;
the functional form of the additive OPF objective is unchanged.

\begin{table}[H]
\centering
\caption{What changes and what remains invariant across \method{}
instantiations.}
\label{tab:invariance}
\small
\resizebox{\textwidth}{!}{%
\begin{tabular}{@{}lll@{}}
\toprule
Component & Domain-specific boundary & Shared JEPA-Anything core \\
\midrule
Raw representation & Image, sequence, graph, set, record & --- \\
Tokenization and descriptors & $\mathcal{A}_{\delta}$ & Common token interface \\
Context--target semantics & $\mathcal{V}_{\delta}$ & Context predicts targets \\
Encoder architecture & ViT, Transformer, GNN, MLP, $\ldots$ & Online/EMA-target scheme \\
Predictive learning & --- & Factorized predictors and state synthesis \\
Regularization & --- & Orthogonality, factor activity, encoder variance \\
Training objective & Original loss $\mathcal{L}_{\mathrm{base}}^{(\delta)}$
& Additive OPF loss in Equation~\eqref{eq:objective} \\
Downstream use & Pooling, readout, probe, decoder, planner &
Online-encoder features or synthesized predicted states \\
\bottomrule
\end{tabular}
}
\end{table}

The interface requires a meaningful predictive relation: the target must have
usable statistical dependence on the context and represent another state of the
same underlying system. It covers pixels, trajectories, graphs, sets, fields,
and multivariate records; factor identities emerge from predictive learning
rather than predefined semantics.

\subsection{Orthogonal Predictive Factorization}

Standard JEPA training predicts a target embedding through a single predictor.
We instead introduce $K$ learned projectors $P_k\in\R^{d\times r}$ and choose
$r$ such that $Kr=d$. In the zero-penalty limit, the resulting mutually
orthogonal subspaces form a full partition of the target space. We factorize the stop-gradient target
representation as
\begin{equation}
    \widetilde z_t=\operatorname{sg}(z_t),
    \qquad
    z_t^{(k)} = P_k^\top \widetilde z_t,
    \qquad k=1,\ldots,K.
\end{equation}
Stop-gradient is applied to the target-encoder output, while the projectors
remain trainable and receive gradients from the predictive, orthogonality, and
factor-activity terms. The learned subspaces are therefore predictive and
orthogonality-regularized.
Each factor has a corresponding predictor $q_k$ that maps the shared context
representation and, when needed, target descriptor $s_t$ into the same
$r$-dimensional space,
\begin{equation}
    \widehat z_t^{(k)}=q_k(z_c,s_t).
\end{equation}
The factor predictions are concatenated as
$\widehat u_t=[(\widehat z_t^{(1)})^\top,\ldots,
(\widehat z_t^{(K)})^\top]^\top\in\R^d$. The complete target state is
synthesized through the Moore--Penrose pseudoinverse of the analysis map
$P^\top$,
\begin{equation}
    \widehat z_t
    =\left(P^\top\right)^\dagger\widehat u_t
    \in\R^d,
    \qquad P=[P_1,\ldots,P_K].
    \label{eq:synthesis}
\end{equation}
When $P$ is exactly orthogonal, $(P^\top)^\dagger=P$, and
Equation~\eqref{eq:synthesis} reduces to
$\widehat z_t=\sum_k P_k\widehat z_t^{(k)}$.
This state is the common interface for applications that decode a future
latent state or feed predictions back into a multi-step rollout.
The descriptor specifies the requested target location or identity. Predictors may ignore it when the context
uniquely determines the target. They may be implemented independently or as a
shared trunk followed by branch-specific heads.

For temporal, action-conditioned, or intervention-conditioned instances, let
$\xi_t$ denote the exogenous input at step $t$---for example, an action,
intervention label, or known forcing. The domain adapter places $\xi_t$ in the
context tokens or target descriptor. When the current context is summarized by
a latent state $z_t$, the operational transition takes the form
\[
    \widehat u_{t+1}
    =\big[q_1(z_t,\xi_t,s_{t+1});\ldots;
           q_K(z_t,\xi_t,s_{t+1})\big],
    \qquad
    \widehat z_{t+1}=(P^\top)^\dagger\widehat u_{t+1}.
\]
Repeated application defines a latent rollout. A planner may score the resulting
trajectory with a known domain reward or a task-specific reward readout; this
division lets OPF supply the predictive state while the application supplies the
reward specification.

To preserve both the direction and magnitude required by state synthesis, we
directly regress each predicted factor to its target,
\begin{equation}
    \mathcal{L}_{\mathrm{pred}}
    =\frac{1}{K|T|r}\sum_{t\in T}\sum_{k=1}^{K}
    \left\|\widehat z_t^{(k)}-z_t^{(k)}\right\|_2^2,
\end{equation}
where $\operatorname{sg}$ denotes stop-gradient on the EMA target-encoder
output.

Distinct predictive factors require an explicit diversity constraint because
several projectors can otherwise learn the same target directions. We require
the columns within each projector to be approximately orthonormal and different
projectors to occupy approximately orthogonal subspaces,
\begin{equation}
\begin{split}
    \mathcal{L}_{\mathrm{orth}}
    ={}&\sum_{k=1}^{K}
    \left\|P_k^\top P_k-I_r\right\|_F^2 \\
    &+\sum_{1\leq i<j\leq K}
    \left\|P_i^\top P_j\right\|_F^2.
\end{split}
\end{equation}
The first term avoids degenerate bases within a factor, while the second
discourages different factors from repeatedly encoding the same directions.
This design is related in spirit to redundancy-reduction objectives in
self-supervised learning \cite{zbontar2021barlow,bardes2022vicreg}, but applies
orthogonality directly to learned predictive target subspaces, complementing
objectives defined on the statistics of a single embedding.

For a strict orthogonal decomposition, the concatenated basis can be factorized
as $P_{\rm raw}=QR$, where $Q^\top Q=I_d$, and partitioned as
$Q=[Q_1,\ldots,Q_K]$. The resulting coordinates $Q_k^\top z$ admit transpose
synthesis $z=QQ^\top z=\sum_k Q_kQ_k^\top z$. The controlled geometry analysis
uses this strict decomposition. Predictive models use the learned
orthogonality-regularized projectors and pseudoinverse synthesis in
Equation~\eqref{eq:synthesis}.

Let $P=[P_1,\ldots,P_K]\in\R^{d\times d}$. The geometric role of the
constraint can be stated directly.

\begin{proposition}[Orthogonal predictive decomposition]
\label{prop:orthogonal-decomposition}
If $P_i^\top P_j=0$ for $i\neq j$ and $P_k^\top P_k=I_r$, then the factor
spaces form an orthogonal direct sum and, for every $z\in\R^d$,
\begin{equation}
    \|P^\top z\|_2^2
    =\sum_{k=1}^{K}\|P_k^\top z\|_2^2
    = \|z\|_2^2,
    \qquad
    z=\sum_{k=1}^{K}P_kP_k^\top z.
    \label{eq:energy}
\end{equation}
Thus the factors preserve all information in $z$ up to an orthogonal change of
basis. In this zero-penalty limit, the pseudoinverse synthesis in
Equation~\eqref{eq:synthesis} reduces to the corresponding orthogonal synthesis
rule.
\end{proposition}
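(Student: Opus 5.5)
The plan is to assemble the two hypotheses into a single matrix identity and derive every claim in the statement from it. Write $P=[P_1,\ldots,P_K]\in\R^{d\times d}$ with $Kr=d$.

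\textbf{Step 1: $P$ is orthogonal.} The $(i,j)$ block of $P^\top P$ is $P_i^\top P_j$. This block equals $I_r$ when $i=j$ and $0$ when $i\neq j$, so $P^\top P=I_d$. Because $P$ is square, a left inverse is also a right inverse. Hence $PP^\top=I_d$ as well, and $P$ is orthogonal.

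This step is the only place where the square shape (the condition $Kr=d$) matters. If $Kr<d$ we would only get a partial isometry, and the claimed reconstruction identity would fail. I would therefore state this point explicitly.

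\textbf{Step 2: orthogonal direct sum.} Let $V_k=\operatorname{range}(P_k)$. From $P_k^\top P_k=I_r$, each $V_k$ has dimension $r$. For $i\neq j$, take $x=P_iu\in V_i$ and $y=P_jv\in V_j$. Then $\langle x,y\rangle=u^\top P_i^\top P_jv=0$, so the spaces are mutually orthogonal. Mutually orthogonal subspaces intersect trivially, so their sum is direct. Its dimension is $Kr=d$, so the sum is all of $\R^d$.

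\textbf{Step 3: energy and reconstruction identities.} Stacking the blocks gives $P^\top z=[(P_1^\top z)^\top,\ldots,(P_K^\top z)^\top]^\top$. Splitting the squared Euclidean norm over these blocks gives the first equality in Equation~\eqref{eq:energy}.

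For the second equality, write $\|P^\top z\|_2^2=z^\top PP^\top z$. By Step 1 this equals $\|z\|_2^2$.

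For reconstruction, start from $z=PP^\top z$. Expanding the block product gives $PP^\top=\sum_k P_kP_k^\top$. The statement that the factors preserve all information up to an orthogonal change of basis is then immediate, since $z\mapsto P^\top z$ is orthogonal and invertible.

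\textbf{Step 4: pseudoinverse synthesis.} An invertible matrix has its ordinary inverse as its Moore--Penrose pseudoinverse, so $(P^\top)^\dagger=(P^\top)^{-1}$. Since $P$ is orthogonal, $(P^\top)^{-1}=P$. Equation~\eqref{eq:synthesis} therefore becomes $\widehat z_t=P\widehat u_t=\sum_k P_k\widehat z_t^{(k)}$, which is the orthogonal synthesis rule.

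If a more direct check is wanted, one can verify the four Penrose conditions for $P$ against $P^\top$. This uses only $P^\top P=PP^\top=I_d$.
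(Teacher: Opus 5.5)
Your proof is correct and follows the paper's route: you compute $P^\top P=I_d$ blockwise, use $Kr=d$ to get $PP^\top=I_d$, expand $\|P^\top z\|_2^2$ over the blocks, and reconstruct via $z=PP^\top z=\sum_k P_kP_k^\top z$. You also spell out the direct-sum dimension count and the reduction $(P^\top)^\dagger=P$, which the paper leaves implicit in its proof and only invokes later in the corollary.
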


\begin{proof}
The assumptions and $Kr=d$ give $P^\top P=PP^\top=I_d$, so $P$ is orthogonal.
The norm identity follows by blockwise expansion of $P^\top z$, and
$z=PP^\top z=\sum_k P_kP_k^\top z$ gives exact reconstruction.
\end{proof}

\begin{corollary}[Stable synthesis versus unconstrained heads]
Let $u=P^\top z$ and let $e\in\R^d$ denote an error in the concatenated
predicted factors. Under the assumptions of
Proposition~\ref{prop:orthogonal-decomposition},
\[
    \widehat z=P(u+e),
    \qquad
    \|\widehat z-z\|_2=\|e\|_2,
    \qquad
    \kappa_2(P)=1.
\]
Without the cross-factor orthogonality constraint, there exist admissible
projectors with repeated directions for which $P$ is rank deficient, and
full-rank projectors with nearly repeated directions for which
$\kappa_2(P)$ is arbitrarily large. Consequently, unconstrained heads provide
no uniform guarantee of either complete recovery or stable synthesis.
\end{corollary}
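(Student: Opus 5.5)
The plan is to reduce the positive part of the corollary to Proposition~\ref{prop:orthogonal-decomposition}, and then prove the negative part with explicit counterexamples.

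\emph{Positive part.} Under the proposition's assumptions with $Kr=d$, we have $P^\top P=PP^\top=I_d$. Hence $(P^\top)^\dagger=(P^\top)^{-1}=P$. The synthesis rule in Equation~\eqref{eq:synthesis}, applied to the perturbed factors $u+e$, then gives $\widehat z=P(u+e)$. Since $Pu=PP^\top z=z$ by Equation~\eqref{eq:energy}, it follows that $\widehat z-z=Pe$. Because $P$ is orthogonal, $\|Pe\|_2=\|e\|_2$, by the same norm identity applied to $P$ instead of $P^\top$. Finally, all singular values of an orthogonal matrix equal $1$, so $\kappa_2(P)=\sigma_{\max}/\sigma_{\min}=1$.

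\emph{Negative part.} Here I would give explicit constructions that satisfy only the within-factor condition $P_k^\top P_k=I_r$, which I take as the meaning of ``admissible'' in the absence of the cross-factor term.

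\textbf{Rank deficiency.} Take $K\ge2$ and set $P_2=P_1$ for any $P_1$ with orthonormal columns. Then $P$ has repeated columns, so $\operatorname{rank}P\le d-r<d$. Hence $P^\top$ has a nontrivial kernel. Any nonzero $z$ in that kernel yields $u=0$, so $z$ cannot be recovered from the factors by any synthesis map, the pseudoinverse included.

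\textbf{Ill-conditioning.} For $\varepsilon\in(0,1)$, keep every block orthonormal but replace one column of $P_2$ by $(\cos\varepsilon)\,a+(\sin\varepsilon)\,b$. Here $a$ is a column of $P_1$, and $b$ is a unit vector orthogonal to all other columns; re-orthonormalizing within $P_2$ is unnecessary if the remaining columns of $P_2$ are chosen orthogonal to both $a$ and $b$. The Gram matrix $P^\top P$ then contains a $2\times2$ block $\begin{pmatrix}1&\cos\varepsilon\\ \cos\varepsilon&1\end{pmatrix}$ and is the identity elsewhere. Its eigenvalues are $1\pm\cos\varepsilon$, so
\[
\kappa_2(P)=\sqrt{\frac{1+\cos\varepsilon}{1-\cos\varepsilon}}\to\infty \quad\text{as } \varepsilon\to0 .
\]
To turn this into a statement about synthesis error, choose $e$ along the right singular vector of $(P^\top)^\dagger$ associated with its largest singular value, $1/\sigma_{\min}(P)$. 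Then $\|\widehat z-z\|_2/\|e\|_2$ is unbounded, so no uniform stability constant exists.

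The main obstacle is the bookkeeping in this ill-conditioning construction: the perturbed column must keep $P_2^\top P_2=I_r$ while the cross-Gram entry with $P_1$ stays isolated in a single $2\times2$ block. To achieve this, I would first do the case $d=2$, $K=2$, $r=1$, with $P_1=e_1$ and $P_2=(\cos\varepsilon,\sin\varepsilon)^\top$. I would then embed it block-diagonally, padding with standard basis vectors, which handles general $d=Kr$ directly.
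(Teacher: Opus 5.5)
Your proposal is correct and follows the paper's proof. For the positive part you use $P^\top P=PP^\top=I_d$ to get $(P^\top)^\dagger=P$, $\widehat z-z=Pe$, norm preservation and $\kappa_2(P)=1$, and for the negative part you rely on duplicate versus nearly duplicate directions. Your explicit constructions make concrete what the paper asserts in a single sentence: setting $P_2=P_1$ forces rank deficiency, and the rotated column yields Gram eigenvalues $1\pm\cos\varepsilon$, so $\kappa_2(P)=\cot(\varepsilon/2)\to\infty$ while every block keeps $P_k^\top P_k=I_r$.
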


\begin{proof}
Proposition~\ref{prop:orthogonal-decomposition} makes $P$ orthogonal, so
$(P^\top)^\dagger=P$,
$P(u+e)=z+Pe$, and orthogonality preserves $\|e\|_2$. Without the cross-factor
constraint, duplicate directions can make $P$ rank deficient, while nearly
duplicate directions drive its smallest singular value toward zero and its
condition number without bound.
\end{proof}

Together, Proposition~\ref{prop:orthogonal-decomposition} and the corollary
establish non-overlapping coverage and stable synthesis;
Table~\ref{tab:citris_geometry} reports these properties in a trained model.
Subsequent experiments evaluate factor activity and domain-level interpretation.

\subsection{Maintaining Factor and Encoder Activity}

Orthogonality specifies how the subspaces relate to one another, while a
per-factor activity floor keeps every projected target coordinate active across
samples.
For a mini-batch
$\mathcal{B}$ and all its sampled target indices, let $\sigma^{\rm fac}_{k,j}$
be the empirical standard deviation of coordinate $j$ in $z_t^{(k)}$.
Because the target representation is stopped, this term shapes the projectors.
We also define a separate activity statistic on the online context
representations.
Let $\sigma^{\rm enc}_{j}$ denote the empirical standard deviation of coordinate
$j$ in the online context representations $z_c$; for token-valued contexts, the
statistic is computed over all valid context tokens in the mini-batch. Standard
deviations are evaluated as $\sqrt{\operatorname{Var}+\epsilon}$. We define
\begin{equation}
\begin{aligned}
    \mathcal{L}_{\mathrm{fac}}
    &=\frac{1}{Kr}\sum_{k=1}^{K}\sum_{j=1}^{r}
    \max(0,\gamma_{\rm fac}-\sigma^{\rm fac}_{k,j}),\\
    \mathcal{L}_{\mathrm{enc}}
    &=\frac{1}{d}\sum_{j=1}^{d}
    \max(0,\gamma_{\rm enc}-\sigma^{\rm enc}_{j}).
\end{aligned}
\end{equation}
The first term keeps projected targets active; the second sends a direct
anti-collapse gradient to the online encoder. We collect the factorized
prediction and regularization terms into an additive OPF loss. If
$\mathcal{L}_{\mathrm{base}}^{(\delta)}$ denotes the loss already used by the
original implementation in domain $\delta$, the optimized training loss is
\begin{equation}
\begin{aligned}
    \mathcal{L}_{\mathrm{OPF}}
    &=\mathcal{L}_{\mathrm{pred}}
      +\lambda_{\mathrm{orth}}\mathcal{L}_{\mathrm{orth}}
      +\lambda_{\mathrm{fac}}\mathcal{L}_{\mathrm{fac}}
      +\lambda_{\mathrm{enc}}\mathcal{L}_{\mathrm{enc}},\\
    \mathcal{L}_{\mathrm{train}}^{(\delta)}
    &=\mathcal{L}_{\mathrm{base}}^{(\delta)}
      +\mathcal{L}_{\mathrm{OPF}}.
\end{aligned}
    \label{eq:objective}
\end{equation}
Each application combines its base objective with
$\mathcal{L}_{\mathrm{OPF}}$; the adapter, view sampler, encoder family, and
input tokenization specialize the predictive term to the data.

\subsection{Shared Predictive Pretraining and Task-Specific Readout}

Every instantiation follows the same optimization loop: (1) adapt a raw
observation into tokens and descriptors; (2) sample context and targets; (3)
encode context with the online encoder and targets with the stop-gradient EMA
encoder; (4) predict every orthogonal target factor; (5) update the online
encoder, projectors, and predictors using the base-plus-OPF training loss in
Equation~\eqref{eq:objective}; and
(6) update the target encoder by EMA. Thus, \method{} specifies one common
algorithm across domain pipelines.

\paragraph{Representation mode.}
Ordinary downstream evaluation retains the online encoder as a common source of
reusable features and attaches a readout that follows the structure of the
domain and task. The EMA target encoder, projectors, and prediction heads are
discarded. For domain $\delta$ and task $\tau$, we write
\begin{equation}
    h_{\delta,\tau}(x)
    =R_{\delta,\tau}\!\left(
    f_\theta^{\delta}(\mathcal{A}_{\delta}(x))
    \right),
    \label{eq:downstream}
\end{equation}
where $R_{\delta,\tau}$ may select a token, pool a set or sequence, preserve a
time-indexed state, or implement a learned probe. Orthogonal predictive
factorization therefore acts as a shared training principle that shapes a
reusable encoder while allowing each modality to use its own aggregation rule.

\paragraph{Operational world-model mode.}
For future-state prediction, intervention forecasting, planning, and
autoregressive simulation, the learned projectors and predictors are retained.
Equation~\eqref{eq:synthesis} then supplies the complete next latent state to a
decoder, planner, or subsequent transition step. Thus the same predictive
pretraining principle supports two uses: reusable encoder features for state
readout and an explicit latent transition interface for predictions composed
over time.

For factor-level analysis, we retain the learned projectors as a diagnostic
interface. Let $z_\delta(x)\in\R^d$ denote the online-encoder state obtained at
the same token or pooled level on which target factorization was trained. We
define
\[
    u_k(x)=P_k^\top z_\delta(x), \qquad k=1,\ldots,K.
\]
These factor coordinates expose distinct predictive modes for domain-specific
analysis, while ordinary downstream tasks use the encoder state directly. In the
scientific applications below, they provide the interface for intervention
analysis and spectral characterization.

\section{Experiments}
\label{sec:experiments}

We evaluate \method{} across heterogeneous visual, biological, clinical,
control, molecular, physical-field, and weather systems as three linked tests
of the proposed design. Each system defines its own observations and
context--target relation while retaining the same JEPA-Anything core. Group I
evaluates terminal readout: vision and single-cell tasks read the online encoder
state, while disease forecasting reads a one-step synthesized future state.
Group II tests whether predicted states can be repeatedly reused in
intervention-conditioned and autonomous dynamics. Group III tests whether factor
coordinates can be reused for scientific analysis and external validation.

\subsection{Common Evaluation Logic}
\label{sec:eval-logic}

The quantitative comparisons in Groups I and II combine domain-standard
baselines, monolithic JEPA baselines, and \method{}. In direct standard JEPA
comparisons, the adapter, encoder architecture, context--target sampler,
optimization budget, data split, downstream readout, and inherited base loss
are held fixed. The standard JEPA condition adds its monolithic predictive
term, whereas \method{} adds $\mathcal{L}_{\mathrm{OPF}}$, including the
factor-specific regularizers. Other domain baselines establish reference
points for the performance scale of each task. Table~\ref{tab:experiment_overview}
summarizes how the shared predictive interface is instantiated and evaluated.

\begin{table}[tp]
\centering
\caption{Overview of \method{} instantiations and evaluation protocols. In
Groups I and II, each implementation combines its base loss with
$\mathcal{L}_{\mathrm{OPF}}$ as specified in
Equation~\eqref{eq:objective}. Numerical OPF entries use the compact form
$d;\,K\!\times\!r$, where $Kr=d$; symbolic entries denote domain-configured
widths. Group III reports the factor interface used for analysis and external
validation.}
\label{tab:experiment_overview}
\fontsize{8.8}{10.4}\selectfont
\setlength{\tabcolsep}{2.4pt}
\renewcommand{\arraystretch}{1.02}
\begin{tabular}{@{}
>{\raggedright\arraybackslash}p{0.17\textwidth}
>{\raggedright\arraybackslash}p{0.25\textwidth}
>{\raggedright\arraybackslash}p{0.17\textwidth}
>{\raggedright\arraybackslash}p{0.12\textwidth}
>{\raggedright\arraybackslash}p{0.235\textwidth}@{}}
\toprule
\shortstack[l]{Group/domain\\and data} &
\shortstack[l]{Context $\rightarrow$ target\\or analysis} &
\shortstack[l]{Backbone or\\interface} &
\shortstack[l]{OPF setup /\\interface} &
\shortstack[l]{Seeds / splits /\\metrics} \\
\midrule
I / vision; controlled MuJoCo scenes & Visible patches $\rightarrow$ masked
patch states & \shortstack[l]{DINOv3 ViT-S\\SigLIP2 Base} &
\shortstack[l]{$384;\,4\!\times\!96$\\$768;\,4\!\times\!192$} & 10 seeds;
image-disjoint and leave-one-cell-out; INJ, Coll., Rec. \\
I / single cell; kidney, PBMC-10K, Adamson, Norman & Representation:
masked expression $\rightarrow$ complete cell state; Norman:
target $z_{\mathrm{response}}:=z_{\mathrm{observed}}-z_{\mathrm{control}}$
& scGPT & Norman: $512;\,4\!\times\!128$ &
5 seeds; kidney pretraining; PBMC fine-tuned/zero-shot and cross-dataset perturbation
transfer; AvgBIO, Pearson \\
I / clinical; longitudinal multimodal cohort & Patient history $\rightarrow$
future patient state & GPT-2 small; risk decoder &
$768;\,4\!\times\!192$ & 5 seeds; fixed patient-level cohort split; more than
1,000 event risks; PRAUC \\
\midrule
II / intervention; CITRIS Interventional Pong & Observation + intervention
$\rightarrow$ next state & Matched encoder and predictor &
$160;\,5\!\times\!32$ & 5 paired seeds; single- and combined-intervention
one-step prediction; six-step free rollout; four-channel MSE \\
II / dynamics; CausalWorld, DMC, PDEBench, WeatherBench2 & State, frames,
action, or forcing $\rightarrow$ next state or field & Matched task-specific
encoder and transition & $128;\,4\!\times\!32$ & 5 seeds;
held-out/OOD and 6-step rollout; MSE
and return \\
II / PDE audit; APEBench Burgers and KS & Current field $\rightarrow$ next
field & MLP; first-layer weight $256\!\times\!64$ &
$128;\,4\!\times\!32$ & 5 paired
seeds; held-out late-state prediction; Burgers 6-step rollout; MSE \\
II / long rollout; 1-D viscous Burgers, 64 points & Current field
$\rightarrow$ autoregressive future field & MLP; input $x_{\rm dim}=1024$;
hidden width 256 & $128;\,4\!\times\!32$ & 3 seeds;
2,000 training steps; ID/high-frequency OOD at H20/H50; MSE \\
II / locomotion; Hopper, Walker2d, HalfCheetah & State + action $\rightarrow$
future control state & Latent dynamics; CEM planner & $32;\,4\!\times\!8$ &
5 seeds; matched resets and 20-step factor interventions; return, $R^2$, MSE \\
II / molecules; water, quartz, paracetamol, benzene & Atomic state
$\rightarrow$ future molecular state & TrajCast-style $O(3)$ equivariant;
$l=1$ sector & $64;\,4\!\times\!16$ (channels) & 5 seeds; 100-step free
rollout; MAE, RMSD \\
\midrule
III / biology; Huh7--PBMC, organoids, tumor fragments, mice & Factor coordinates
$\rightarrow$ candidate intervention nomination & OPF factor-analysis
interface & Retained $P_k$ & 3 organoids and 3 fragments plus external mouse validation; killing and
activation \\
III / orbital; simulated position--velocity trajectories & Factor coordinates
$\rightarrow$ latent spectral modes & Trajectory encoding; spectral analysis &
Retained $P_k$ & One analyzed run with analytic-law validation; slope and $R^2$ \\
\bottomrule
\end{tabular}
\end{table}

The quantitative tasks use their domain metrics rather than averaging raw
values across incompatible scales. Continuous-control results are summarized
over five training seeds and matched environment resets. Group III instead
connects model-derived hypotheses or latent modes to biological experiments or
established physical laws.


\begin{figure}[t]
\centering
\includegraphics[width=\textwidth]{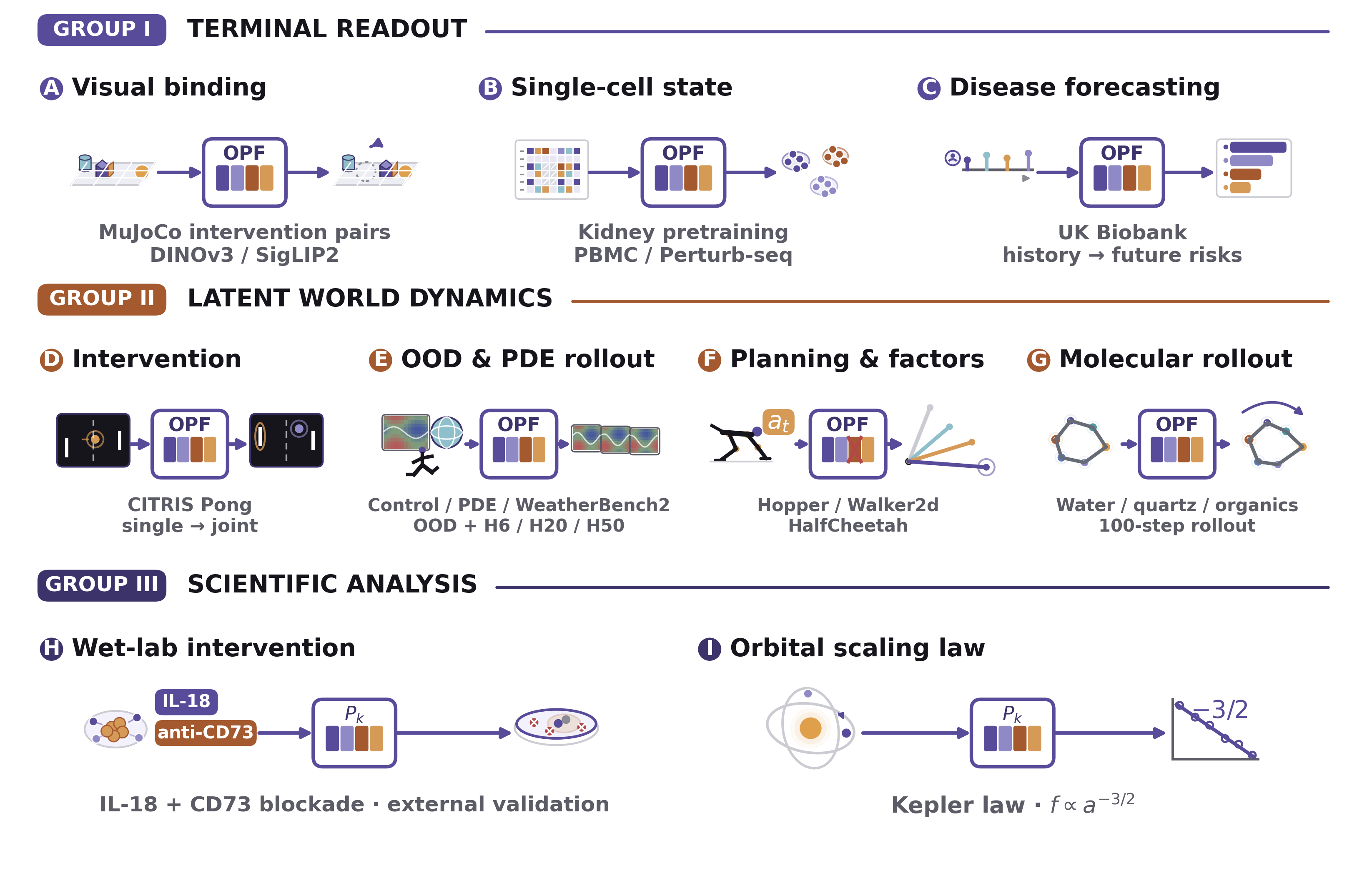}
\caption{Scenario atlas for \method{}. The layout follows the paper's three
evaluation groups: terminal readout, latent world dynamics, and scientific
analysis. Domain-specific contexts and outputs are connected through the same
factorized predictive-state interface; Group~III retains the learned factor
coordinates for external biological validation and physical-law analysis.}
\label{fig:scenario_atlas}
\end{figure}
\subsection{Group I: Terminal Readout of Predictive World States}

This group evaluates a predictive state at a terminal downstream readout. The
visual and single-cell experiments read the online encoder state directly. The
disease experiment predicts orthogonal factors, synthesizes one future patient
state, and reads event risks from that state. The distinction from Group II is
recursive use: Group I terminates at the readout, while Group II feeds predicted
states into subsequent transition or planning steps.

\subsubsection{Controlled Visual Binding}
\label{sec:cv}

\paragraph{Question.}
The first scenario tests whether orthogonal predictive factorization improves
the compositional readout of controlled visual changes. A frozen representation
must preserve both where a change occurs and which operation occurred, including
support--operation combinations withheld from readout training.

\paragraph{Predictive pretraining.}
For DINOv3 ViT-S ($d=384$) \cite{simeoni2025dinov3} and SigLIP2 Base
($d=768$) \cite{tschannen2025siglip2}, the frozen baseline directly uses the
original checkpoint. We use $K=4$, giving factor widths $r=96$ and $r=192$,
respectively. Standard JEPA and \method{} start from identical copies and are
trained by block-masked patch prediction with two-dimensional position
descriptors. They share the same data, masks, optimization settings, and
readout architecture; standard JEPA uses a monolithic target, whereas
\method{} uses the orthogonally factorized target. The resulting encoders are
frozen for controlled-intervention evaluation.

\paragraph{Readout and evaluation.}
For each source image and its controlled intervention, the patch-token
difference defines an innovation field. We evaluate reconstructed MuJoCo scenes
\cite{todorov2012mujoco} with image-disjoint splits, leave-one-cell-out readout, and
injective Hungarian alignment \cite{kuhn1955hungarian}. Known grid and learned
grid are alternative supervision conditions for the downstream structured grid
readout. Evaluating the frozen checkpoints under both separates supplied from
estimated grid structure. Standard JEPA and
\method{} use the same learned-grid readout, so their comparison isolates
encoder pretraining. We
report injective held-out-cell accuracy (INJ), collapse rate at threshold $0.5$
(Coll.), and grid recovery (Rec.), averaged over ten seeds. Recovery is not
applicable when the grid is supplied.

\begin{table}[t]
\centering
\caption{Controlled visual binding on MuJoCo, separating encoder pretraining
from downstream grid supervision. Known grid supplies the
support--operation taxonomy; learned grid estimates it from flat cell labels.
Standard JEPA and \method{} use the same learned-grid readout.}
\label{tab:cv_jepa_comparison}
\begin{tabular}{@{}lllrrr@{}}
\toprule
Backbone & \shortstack{Encoder\\pretraining} &
\shortstack{Grid\\supervision} & INJ $\uparrow$ &
Coll. $\downarrow$ & Rec. $\uparrow$ \\
\midrule
DINOv3 & Frozen checkpoint & Known grid & .452 & .667 & --- \\
DINOv3 & Frozen checkpoint & Learned grid & .569 & .433 & .643 \\
DINOv3 & Standard JEPA & Learned grid
& .572 & .426 & .645 \\
DINOv3 & \method{} & Learned grid
& .581 & .417 & .659 \\
\midrule
SigLIP2 & Frozen checkpoint & Known grid & .476 & .656 & --- \\
SigLIP2 & Frozen checkpoint & Learned grid & .484 & .511 & .676 \\
SigLIP2 & Standard JEPA & Learned grid
& .483 & .514 & .679 \\
SigLIP2 & \method{} & Learned grid
& .490 & .503 & .688 \\
\bottomrule
\end{tabular}
\end{table}

With the learned-grid readout held fixed, \method{} increases INJ and grid
recovery while reducing collapse for both DINOv3 and SigLIP2.

\subsubsection{Single-Cell State Representation and Perturbation Prediction}
\label{sec:cell}

\paragraph{Question.}
Single-cell RNA sequencing represents each cell through a high-dimensional but
extremely sparse expression profile. Gene-level reconstruction can therefore
devote capacity to technical dropout and count variability. This scenario asks
whether latent prediction learns a more stable representation of cellular
state, and whether orthogonal factorization further improves cell-type
clustering and perturbation-response prediction.

\paragraph{Protocol.}
Following Cell-JEPA \cite{elsheikh2026celljepa}, each cell is tokenized by gene
identity and discretized expression value using an scGPT backbone
\cite{cui2024scgpt}. The representation-learning path follows the masked
Cell-JEPA construction: a student encoder receives masked expression values and
predicts the cell-level embedding produced by an EMA teacher from the unmasked
cell. The Norman perturbation experiment predicts the latent response
$z_{\mathrm{response}}=z_{\mathrm{observed}}-z_{\mathrm{control}}$. Its
512-dimensional response representation uses $K=4$ factors of width $r=128$.
Cell-JEPA combines its monolithic latent loss with masked-gene reconstruction.
We re-evaluate it as the monolithic JEPA comparison in
Table~\ref{tab:cell_jepa}, while \method{} applies factorized latent prediction
to the target used by each experiment. Perturbation evaluation follows the
encoder $\rightarrow$ factor prediction heads $\rightarrow$ gating
$\rightarrow$ latent response residual $\rightarrow$ expression decoder path.

\paragraph{Evaluation.}
Models are pretrained on approximately 800,000 human kidney cells and evaluated
on PBMC-10K cell-type clustering under finetuned and zero-shot settings. We also
evaluate perturbation-response prediction on the Adamson
\cite{adamson2016perturbseq} and Norman \cite{norman2019perturbseq} datasets.
AvgBIO summarizes clustering quality, while Pearson correlation measures
absolute post-perturbation expression prediction. All reported results are
averaged over five training seeds.

\begin{table}[t]
\centering
\caption{Single-cell representation and perturbation evaluation. Perturbation
columns report Pearson correlation on absolute post-perturbation expression.}
\label{tab:cell_jepa}
\small
\setlength{\tabcolsep}{4pt}
\renewcommand{\arraystretch}{1.12}
\begin{tabular}{@{}lccrrrr@{}}
\toprule
Model & JEPA & \shortstack{Orthogonal\\factors} &
\shortstack{PBMC finetuned\\AvgBIO $\uparrow$} &
\shortstack{PBMC zero-shot\\AvgBIO $\uparrow$} &
\shortstack{Norman\\Pearson $\uparrow$} &
\shortstack{Adamson\\Pearson $\uparrow$} \\
\midrule
scGPT & No & No & 0.7531 & 0.5288 & 0.631 & 0.905 \\
Cell-JEPA & Yes & No & 0.7830 & 0.7194 & 0.787 & 0.937 \\
\method{} & Yes & Yes
& 0.8301 & 0.7752
& 0.814 & 0.942 \\
\bottomrule
\end{tabular}
\end{table}

\method{} attains the highest value on both PBMC clustering metrics and both
perturbation-response datasets.

\subsubsection{Longitudinal Disease-State Forecasting}
\label{sec:disease}

\paragraph{Question.}
Longitudinal health records are sparse, irregularly timed, and only partially
observe the physiological state that drives disease progression. This scenario
tests whether predicting future latent health states supports broad and
long-range clinical-event forecasting, and whether orthogonal factorization
improves prediction across the full event vocabulary.

\paragraph{Protocol.}
Each patient state combines time-ordered clinical events, demographic context,
and molecular or biochemical variables. The context encoder and its EMA target
copy use the GPT-2 small architecture with hidden width $d=768$
\cite{radford2019gpt2}; the former represents past history and the latter a
future patient state. We use $K=4$ factors of width $r=192$. A monolithic
health-state JEPA provides the direct comparison. The \method{} condition
predicts orthogonal factors and reconstructs the complete future state with
Equation~\eqref{eq:synthesis}. A shared lightweight decoder
maps the resulting state to risks for more than 1,000 future clinical events.
Figure~\ref{fig:disease_forecasting} compares the resulting model rankings.

\paragraph{Evaluation.}
Mean area under the precision--recall curve (PRAUC) across the full event
vocabulary is the primary metric because many outcomes are rare. All methods
use five training seeds and the same fixed patient-level cohort split.

\begin{figure}[t]
\centering
\includegraphics[width=0.92\textwidth]{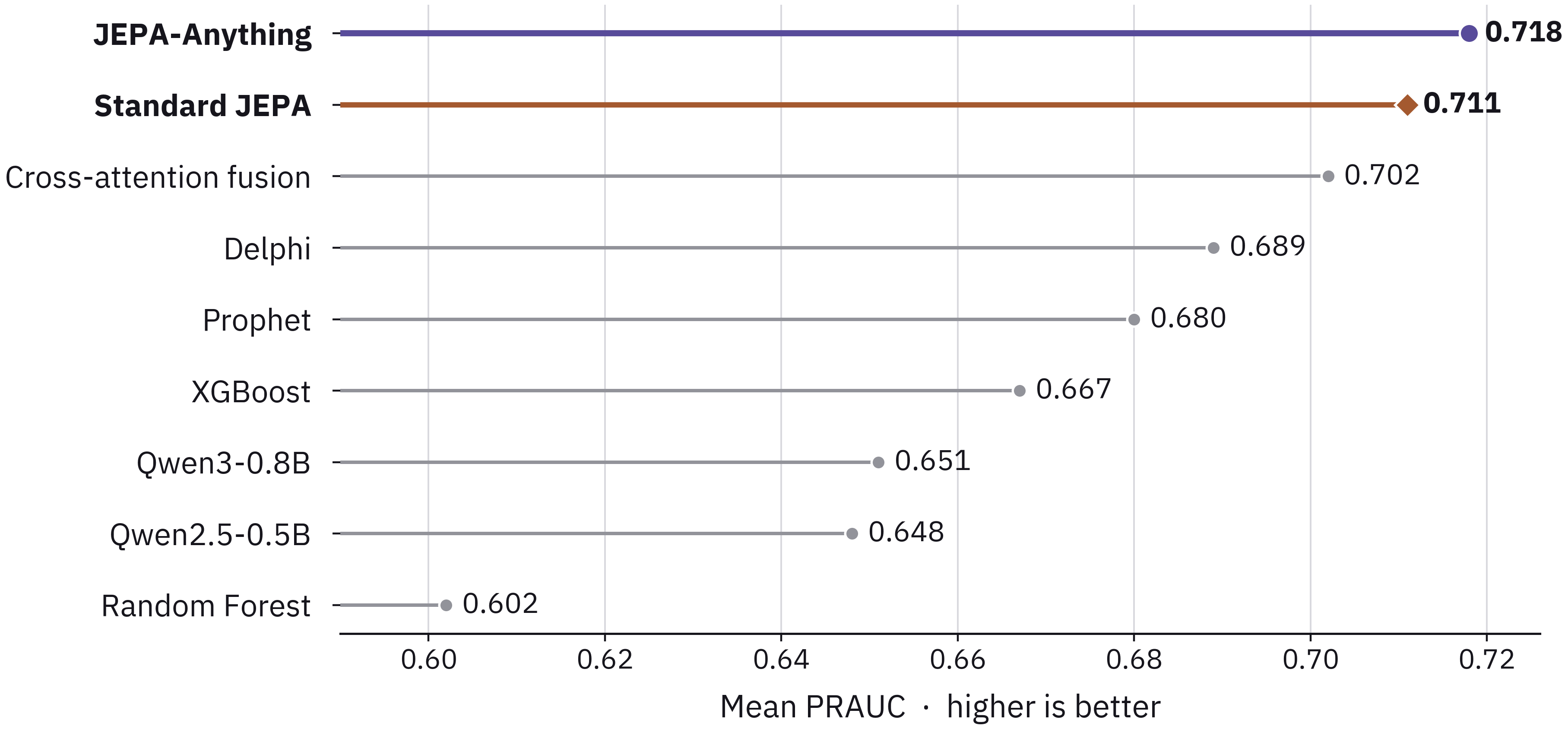}
\caption{Broad-spectrum prediction of more than 1,000 future clinical events.
Methods are ranked by mean PRAUC using the same cohort split and event
vocabulary; higher is better.}
\label{fig:disease_forecasting}
\end{figure}

The factorized future state yields a higher mean PRAUC than the matched
monolithic JEPA across the event vocabulary.

\FloatBarrier
\subsection{Group II: Latent World Dynamics and Rollout Stability}

This group tests the world-modeling role of the factorized state directly:
predicted latent states are used repeatedly by a planner or fed back through an
autoregressive rollout, so representation errors become dynamical errors. It
therefore evaluates the operational world-model mode. It includes controlled
and visual dynamics, physical fields, weather forecasting, closed-loop
planning, and molecular trajectories.

\subsubsection{Intervention-Conditioned Compositional Prediction}
\label{sec:citris-intervention}

\paragraph{Question.}
CITRIS Interventional Pong \cite{lippe2022citris} is a controlled dynamical
environment based on a simplified Pong game. Each sequence records how the
game state evolves and also identifies which underlying state factors were
externally changed between observations. Here, an intervention is not an action
chosen by the model: the experiment changes one or more factors of the system,
and the model receives both the pre-intervention observation and a label
specifying what was changed. Its task is to predict the resulting next state.
Figure~\ref{fig:citris_intervention} summarizes the paired prediction results.

We evaluate two levels of generalization. In the first, a single factor is
intervened on in a way represented in the training distribution. In the second,
multiple factors are intervened on together in a combination that was withheld
from training, even though the individual types of change were observed
separately. The latter condition therefore asks whether the model merely
memorizes complete intervention patterns or can recombine learned state-change
rules when they occur together in a new configuration. Lower prediction error
in both settings indicates more accurate intervention-conditioned dynamics and
better compositional reuse of previously learned changes. We additionally
evaluate six-step free rollout.

\paragraph{Protocol and evaluation.}
The dense standard JEPA baseline and \method{} use the same data, model budget,
and five fixed training seeds. We report the mean four-channel MSE for
single-intervention and combined-intervention one-step prediction, and for
six-step free rollout.

\begin{figure}[t]
\centering
\includegraphics[width=0.96\textwidth]{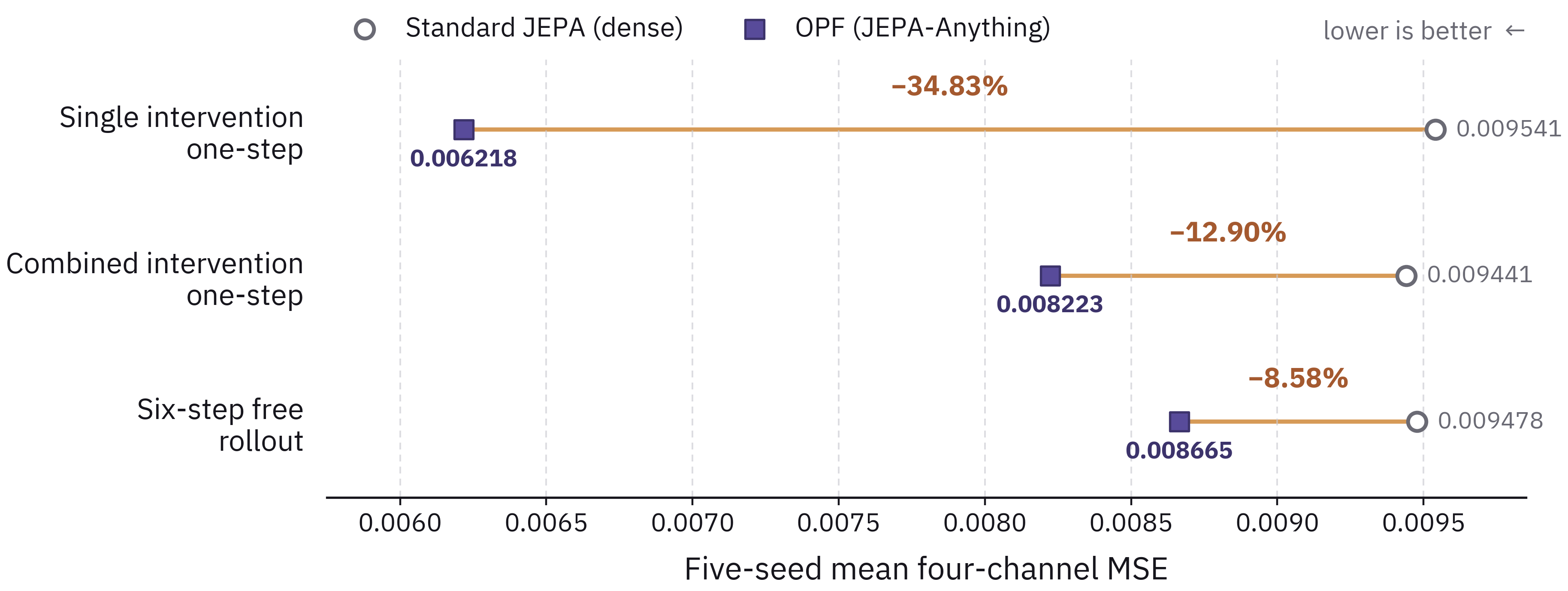}
\caption{Intervention-conditioned prediction and free rollout on CITRIS
Interventional Pong. Lower four-channel MSE is better. Points show means over
five paired seeds; annotations give the reduction computed from the reported
means.}
\label{fig:citris_intervention}
\end{figure}

Relative to the dense standard JEPA baseline, \method{} reduces
single-intervention one-step MSE from 0.009541 to 0.006218,
combined-intervention one-step MSE from 0.009441 to 0.008223, and six-step
free-rollout MSE from 0.009478 to 0.008665. The corresponding reductions are
34.83\%, 12.90\%, and 8.58\%, respectively.

\FloatBarrier
\paragraph{Orthogonality mechanism audit.}
We separately isolate the structural role of orthogonality by comparing the
factor interface with a capacity-matched unconstrained multi-head condition on
the same official CITRIS data, five training seeds, and 2,500-step budget. The
audit measures whether the learned factor blocks overlap, whether their
combined analysis matrix covers the state space with stable singular values,
whether the original latent state can be synthesized directly from the factor
coordinates, and whether this structure is obtained without deactivating most
factors.

\begin{table}[H]
\centering
\caption{Mechanism audit of the factor interface on CITRIS Interventional
Pong. Entries are mean $\pm$ sample standard deviation over five paired seeds.}
\label{tab:citris_geometry}
\footnotesize
\begin{tabular}{@{}lrr@{}}
\toprule
Metric & Unconstrained multi-head & Orthogonal factorization \\
\midrule
Cross-factor subspace overlap $\downarrow$
& $0.4550\pm0.0420$ & $(5.18\pm0.34)\!\times\!10^{-16}$ \\
Minimum singular value $\sigma_{\min}(P)$ $\uparrow$
& $0.00513\pm0.00233$ & $0.999989\pm0.000003$ \\
Condition number $\kappa_2(P)$ $\downarrow$
& $438.52\pm170.00$ & $1.00005\pm0.000004$ \\
\shortstack[l]{Transpose synthesis NMSE\\(exact factors) $\downarrow$}
& $0.7886\pm0.0299$ & $(2.98\pm0.11)\!\times\!10^{-14}$ \\
\bottomrule
\end{tabular}
\end{table}

The four metrics isolate complementary geometric properties. Cross-factor
overlap measures redundancy between factor subspaces; $\sigma_{\min}(P)$ measures
the weakest-covered state direction; and $\kappa_2(P)$ measures the sensitivity
of the coordinate system to errors. The reported synthesis NMSE uses exact
coordinates $u=P^\top z$ and transpose synthesis $Pu$, providing a measure of
factor-interface geometry independent of predictor error. Together, the results
show that orthogonal factorization
yields a self-dual, unit-conditioned coordinate system in which $P$ synthesizes
the state from $P^\top z$. Orthogonality therefore turns multiple prediction
outputs into a complete, non-overlapping, and numerically stable world-state
interface.

\subsubsection{Out-of-Distribution and Long-Horizon Dynamics}
\label{sec:ood-dynamics}

\paragraph{Question.}
This benchmark evaluates two capabilities required of a predictive world
model. First, after learning from a limited set of trajectories, the model must
predict states from unseen initial conditions, held-out episodes, or later time
periods. This out-of-distribution test asks whether the model has learned a
reusable transition rule rather than memorized the trajectories used for
training. Second, the model is rolled forward for six consecutive steps. At
each step its prediction becomes part of the input for the next prediction, so
small errors can compound over time. A model with more stable learned dynamics
should therefore maintain lower error as the rollout proceeds.

We apply the same evaluation principle to systems represented as moving pixels,
robot states and control-conditioned observations, and continuous physical or
weather fields. These settings are instantiated with CausalWorld
\cite{ahmed2021causalworld}, the DeepMind Control Suite
\cite{tassa2018dmcontrol}, PDEBench \cite{takamoto2022pdebench}, and
WeatherBench2 \cite{rasp2024weatherbench2}. The comparison tests whether
assigning complementary parts of a latent state transition to separate
predictive factors produces more accurate predictions on unseen states and
slows error growth during repeated prediction.

\paragraph{Protocol.}
The benchmark compares standard JEPA and \method{} over ten tasks and five
random seeds $\{11,23,37,53,71\}$. Standard JEPA uses one shared latent target,
whereas \method{} decomposes the target into four factors. Within each task and
seed, both variants use the same data, encoder and transition backbone,
optimization budget, evaluation split, and compute backend.

All rollout tasks receive the true contiguous six-step action or target
sequence, predict one step at a time, and compare each prediction with ground
truth at the corresponding time. WeatherBench2 normalization uses statistics
from the training period only, and control inputs use post-intervention
observations. Figure~\ref{fig:dynamics_summary} reports relative changes computed
from the five-seed means for the prediction tasks and mean $\pm$ sample standard
deviation for CausalWorld control.

\begin{figure}[t]
\centering
\includegraphics[width=\textwidth]{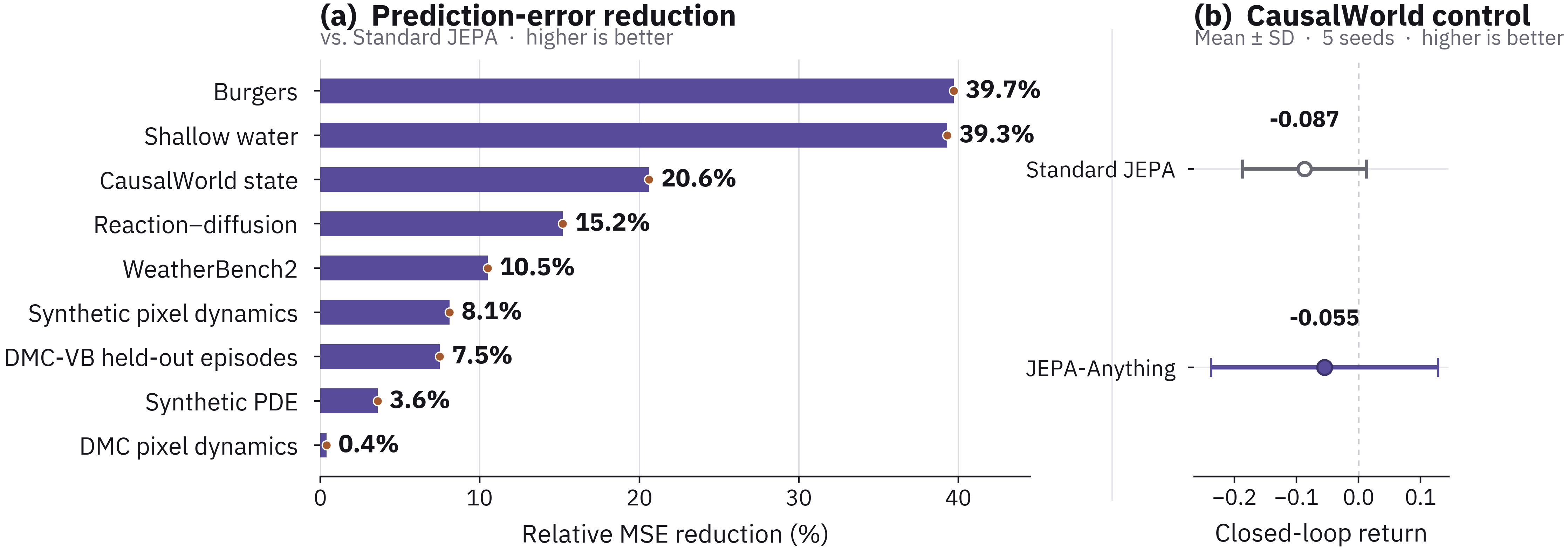}
\caption{Results from the ten-task matched dynamics benchmark. Panel (a)
shows the relative MSE reduction of \method{} versus standard JEPA, computed
from the reported five-seed means for the nine prediction tasks. Panel (b)
shows CausalWorld closed-loop control return as mean $\pm$ sample standard
deviation over five seeds; higher return is better.}
\label{fig:dynamics_summary}
\end{figure}

\FloatBarrier
\paragraph{Rollout and factor diagnostics.}
Table~\ref{tab:ood_rollout_positive} reports the first and sixth rollout steps
for four representative systems, making the accumulation of prediction error
explicit.

\begin{table}[H]
\centering
\caption{Six-step rollout diagnostics. Lower MSE is better. Each
entry is averaged over five seeds; the full evaluation records every
intermediate step.}
\label{tab:ood_rollout_positive}
\small
\begin{tabular}{@{}lrrrr@{}}
\toprule
& \multicolumn{2}{c}{Standard JEPA} & \multicolumn{2}{c}{\method{}} \\
\cmidrule(lr){2-3}\cmidrule(l){4-5}
Task & Step 1 & Step 6 & Step 1 & Step 6 \\
\midrule
CausalWorld state prediction & 0.01306 & 0.01307 & 0.01037 & 0.01079 \\
DMC pixel dynamics & 0.008539 & 0.008594 & 0.008498 & 0.008526 \\
PDEBench Burgers & 0.001830 & 0.006369 & 0.001101 & 0.004014 \\
PDEBench shallow water & 0.007090 & 0.010510 & 0.003999 & 0.006522 \\
\bottomrule
\end{tabular}
\end{table}

\paragraph{Additional APEBench evaluation.}
We additionally evaluate Burgers and Kuramoto--Sivashinsky (KS) dynamics under
the autoregressive PDE protocol of APEBench
\cite{koehler2024apebench}. We report these results separately because APEBench
and PDEBench use different normalizations. The APEBench models use an MLP with
a first linear-layer weight of shape $[256,64]$ and OPF dimensions
$d=128$, $K=4$, and $r=32$.
Table~\ref{tab:apebench_positive} reports held-out late-state prediction for
both systems and the six-step Burgers rollout.

\begin{table}[H]
\centering
\caption{Prediction results in the additional APEBench evaluation. Lower MSE
is better.}
\label{tab:apebench_positive}
\small
\begin{tabular}{@{}llrr@{}}
\toprule
System & Metric & Standard JEPA & \method{} \\
\midrule
Burgers & Held-out late MSE $\downarrow$
& $0.16621\pm0.01740$ & $0.08389\pm0.01770$ \\
Burgers & Six-step rollout MSE $\downarrow$
& $0.29186\pm0.00682$ & $0.16153\pm0.03500$ \\
Kuramoto--Sivashinsky & Held-out late MSE $\downarrow$
& $1.1012\pm0.0763$ & $0.9558\pm0.0340$ \\
\bottomrule
\end{tabular}
\end{table}

The Burgers errors decrease by approximately 49.5\% on the held-out late
segment and 44.7\% over six rollout steps, with improvement in every seed. KS
late-state prediction improves by approximately 13.2\%, also in every seed.
This extends the improvement to chaotic nonlinear dynamics.

\paragraph{Capacity-matched Burgers long-rollout evaluation.}
We additionally evaluate a parameter-matched 50-step protocol. Standard JEPA
and \method{} use the same 1-D
periodic viscous Burgers data with 64 grid points and $\nu=0.01$, and are
evaluated over three seeds on both in-distribution and high-frequency
out-of-distribution trajectories. The emulator receives an input of dimension
$x_{\rm dim}=1024$ and uses an MLP with hidden width 256. Its latent state has
$d=128$ dimensions and is divided into $K=4$ factors of width $r=32$, giving a
stacked projector shape of $(4,128,32)$. Models are trained for 2,000 steps.
Table~\ref{tab:burgers_long_eval} reports the 20- and 50-step errors.

\begin{table}[H]
\centering
\caption{Capacity-matched Burgers long-rollout evaluation over three seeds.
Lower error is better. Relative change is computed against standard JEPA.}
\label{tab:burgers_long_eval}
\small
\resizebox{\textwidth}{!}{%
\begin{tabular}{@{}lccc@{}}
\toprule
Split & Standard H20 / H50 & \method{} H20 / H50
& \method{} relative change \\
\midrule
ID & 1.1185 / 0.8181 & 1.0532 / 0.7923 & $-5.84\%$ / $-3.15\%$ \\
High-frequency OOD & 1.0649 / 0.8452 & 1.0116 / 0.8214
& $-5.01\%$ / $-2.82\%$ \\
\bottomrule
\end{tabular}}
\end{table}

The gain persists on both splits and at both horizons, including high-frequency
OOD trajectories, although it narrows from H20 to H50. The evaluation extends
the shorter APEBench rollout results to 50 prediction steps.

For the nine predictive tasks with routing statistics, \method{}
uses an average of $3.782$ effective factors out of four, with an average
dominant-factor share of $0.324$, indicating distributed activity across the
four branches.

\subsubsection{Continuous-Control Planning}
\label{sec:control}

\paragraph{Question.}
In model-based control, the agent uses the learned world model as an internal
simulator. Figure~\ref{fig:control_planning} reports the comparison between
standard JEPA and the full \method{} model. At each decision point, the CEM
planner generates many
candidate action sequences, rolls each sequence forward through the model,
estimates its future return, and executes the first action from the
highest-scoring sequence.
The model must therefore predict not only the immediate next state but a chain
of states under different possible actions. Even a small latent prediction
error can accumulate across this imagined trajectory, causing the planner to
rank a poor action sequence above a genuinely better one.

This experiment asks whether organizing the predicted state into orthogonal
factors improves the quality of these planning decisions. Standard JEPA and
\method{} are matched in parameter count and prediction cost, isolating the
effect of factorization. The final evaluation uses the return obtained in the
real environment because the central question is whether the learned latent
dynamics lead the planner to choose better actions.

\paragraph{Protocol.}
We evaluate Hopper-v5, Walker2d-v5, and HalfCheetah-v5
\cite{todorov2012mujoco}. A dense, capacity-matched standard JEPA and the full
\method{} model with $K=4$ use the same data, backbone, optimization budget,
and evaluation protocol. Training uses seeds $\{42,123,456,789,2026\}$. Each
checkpoint is evaluated on five identical environment resets, and paired
comparisons reset the CEM candidate-action randomness.

Branch widths are calibrated for $K\in\{1,2,4,8\}$. Relative to dense standard
JEPA, the training-parameter differences are $0.000\%$, $0.006\%$, $0.126\%$,
and $0.246\%$, while prediction-head FLOP differences are $0.000\%$, $0.742\%$,
$0.992\%$, and $1.449\%$, respectively, all below $0.3\%$ in parameters and
$2\%$ in FLOPs.

\begin{figure}[t]
\centering
\includegraphics[width=\textwidth]{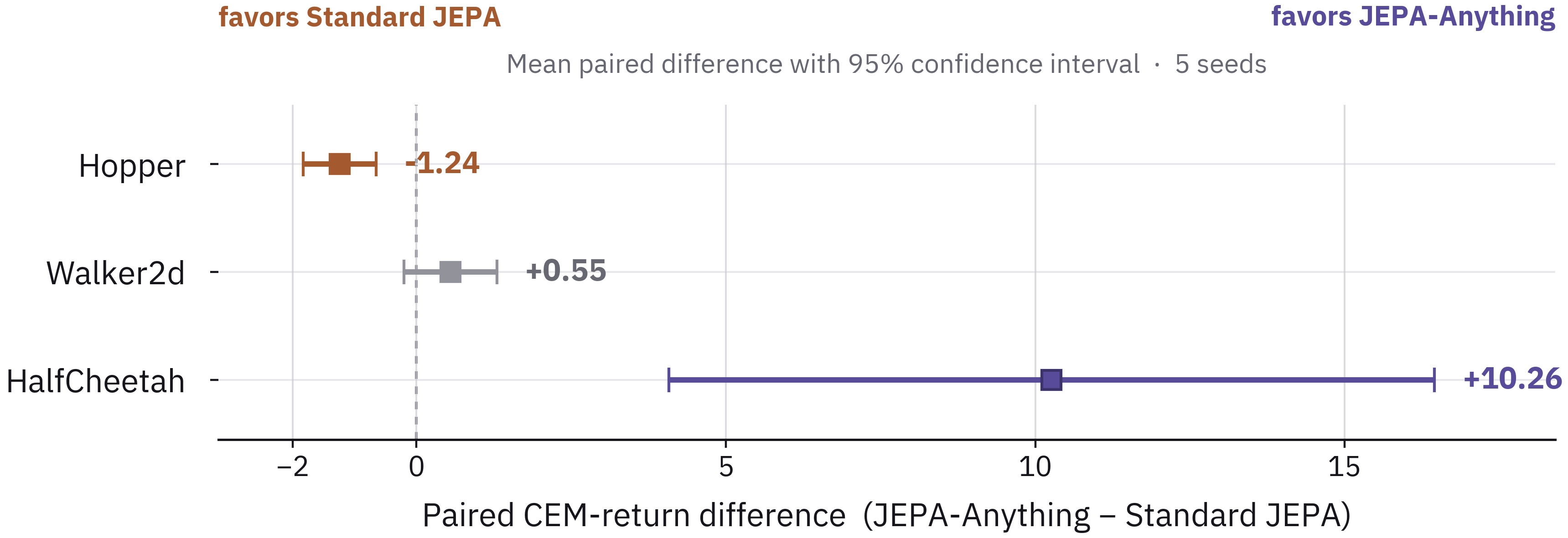}
\caption{Capacity-matched continuous-control planning. Squares show the mean
paired CEM-return difference between \method{} and standard JEPA over five
training seeds; error bars show 95\% confidence intervals. Positive values
favor \method{}.}
\label{fig:control_planning}
\end{figure}

\method{} attains higher mean CEM return on Walker2d and HalfCheetah, while
Hopper favors standard JEPA. Planning performance is therefore
environment-dependent. The factor-level interventions below test whether every
learned factor is functionally used during rollout and planning.

\FloatBarrier
\subsubsection{Factor Activity and Functional Interventions in Locomotion}
\label{sec:factor_validation}

\paragraph{Question and protocol.}
This experiment tests whether the four orthogonal groups in the locomotion
world model form active and individually usable state channels. For the full
$K=4$ model, the 32-dimensional state is partitioned into four learned
eight-dimensional subspaces. On held-out transitions, a separate ridge probe
uses one factor at a time to predict the next-step observation change
$\Delta o_t=o_{t+1}-o_t$. We additionally measure factor-wise standard
deviation, replace one factor by its training-set mean throughout a 20-step
rollout, and repeat the intervention during paired CEM evaluation in the real
environment.

\paragraph{Descriptive readout.}
The single-factor probes reveal differentiated motion preferences. On Hopper,
the most readable changes from F1--F4 are foot angle
($R^2=0.294$), torso height ($0.347$), torso angle ($0.424$), and leg angle
($0.310$), respectively. These are linearly accessible motion preferences
rather than preassigned one-to-one factor labels; one factor may retain
information about several related variables.

\paragraph{Activity and intervention.}
All four groups remain active. Across the analyzed checkpoints, their
factor-wise standard deviations lie in $0.692$--$0.715$ on Hopper,
$0.587$--$0.601$ on Walker2d, and $0.673$--$0.678$ on HalfCheetah. Figure
\ref{fig:factor_intervention} reports the change caused by masking one factor.
Positive heatmap values indicate increased 20-step MSE; negative CEM-return
changes indicate reduced real-environment return.

\begin{figure}[t]
\centering
\includegraphics[width=\textwidth]{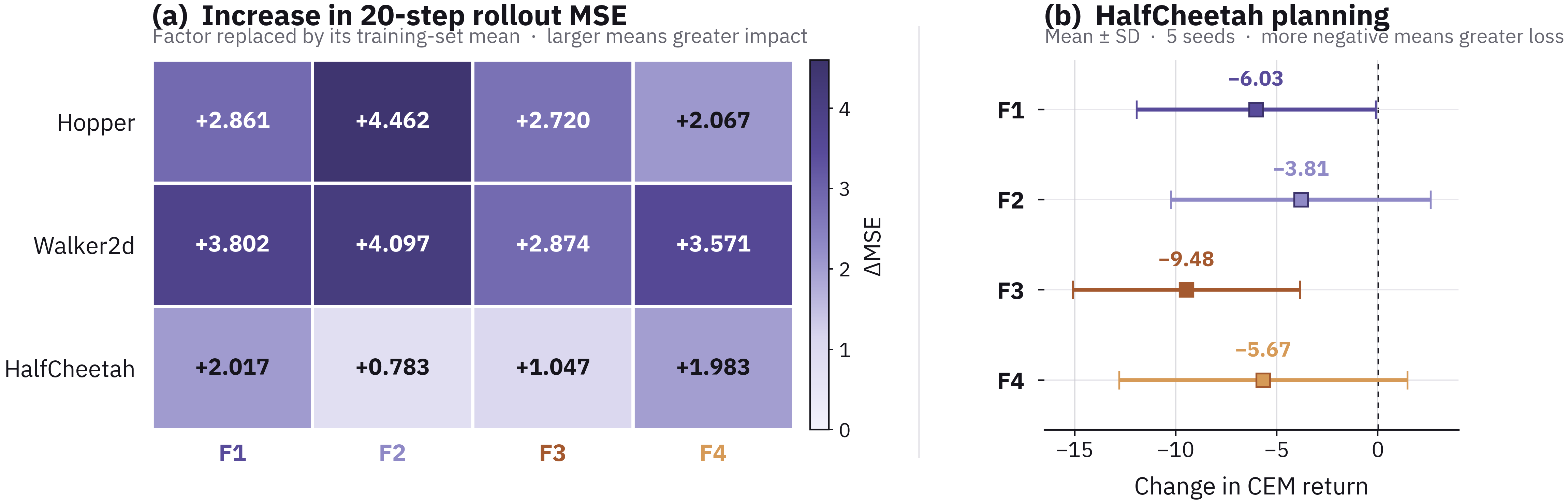}
\caption{Factor-wise functional interventions. Each factor is replaced by its
training-set mean throughout evaluation. Panel (a) gives the increase in
20-step rollout MSE. Panel (b) gives the HalfCheetah CEM-return change as mean
$\pm$ sample standard deviation over five training seeds.}
\label{fig:factor_intervention}
\end{figure}

Replacing a factor by its training-set mean removes sample-specific information
while keeping the intervened coordinate in a typical numerical range. Masking
any factor increases rollout error in all three environments. In HalfCheetah,
masking F1 or F3 reduces return in all five training seeds, while masking F2 or
F4 reduces return in four of five. Together, factor activity, single-factor
readout, and masking form an active--readable--functionally-used evidence chain.
They show that orthogonal predictive factorization supplies geometrically
non-overlapping, active, and individually intervenable state channels.

\FloatBarrier
\subsubsection{Force-Free Molecular Forecasting}
\label{sec:md}

\paragraph{Question.}
Force-free molecular forecasters directly predict future atomic positions and
velocities, then feed their own outputs back autoregressively. Small local
errors can therefore accumulate even when one-step prediction is accurate. This
scenario tests whether orthogonal future factors improve one-step accuracy and
long-horizon molecular forecasting stability.

\paragraph{Protocol.}
Each molecular state contains atomic positions, velocities, species, and the
simulation cell. We use a TrajCast-style equivariant forecaster
\cite{thiemann2026trajcast} on liquid water, crystalline $\alpha$-quartz,
gas-phase paracetamol, and benzene, and denote the JEPA-pretrained reference
condition as TrajCast-JEPA. \method{} retains the same system data,
$O(3)$-equivariant backbone, prediction intervals, supervised adaptation, and
rollout protocol while applying orthogonal predictive factorization during
pretraining. We apply OPF to the 64-channel multiplicity axis of the $l=1$
sector, using $d=64$, $K=4$, and $r=16$. Because an $l=1$ irrep has three
components, the corresponding flattened block contains $64\times3=192$
scalar coordinates; the learned channel projections act identically on the
three components, equivalently as $P_k\otimes I_3$, to preserve equivariance.

\paragraph{Evaluation.}
We report means over five independently trained models. For each seed, we compute the
one-step displacement MAE and the median final-position RMSD after 100 free
autoregressive steps, and then average each metric across the five seeds. The
latter is the primary measure of rollout stability; lower values are better.
Table~\ref{tab:molecular_forecasting} reports the values in \AA{}.

\begin{table}[t]
\centering
\caption{Force-free molecular forecasting with a TrajCast-style backbone.
Entries are means over five trained seeds. For each seed, we compute one-step
displacement MAE and median final-position RMSD after 100 autoregressive steps.
Values are reported in \AA{}; lower is better.}
\label{tab:molecular_forecasting}
\small
\begin{tabular}{@{}llrrr@{}}
\toprule
System & Metric $\downarrow$ & Scratch & TrajCast-JEPA & \method{} \\
\midrule
Water & MAE & 0.00387 & 0.00452 & \textbf{0.00376} \\
Water & RMSD & 3.331 & 2.536 & \textbf{2.459} \\
Quartz & MAE & 0.01080 & 0.01043 & \textbf{0.01011} \\
Quartz & RMSD & 2.089 & 1.912 & \textbf{1.877} \\
Paracetamol & MAE & 0.00901 & 0.00777 & \textbf{0.00705} \\
Paracetamol & RMSD & 3.155 & 1.868 & \textbf{1.776} \\
Benzene & MAE & $2.59\!\times\!10^{-5}$ & $2.10\!\times\!10^{-5}$
& $\mathbf{2.05\!\times\!10^{-5}}$ \\
Benzene & RMSD & 0.0958 & 0.0701 & \textbf{0.0645} \\
\bottomrule
\end{tabular}
\end{table}

\method{} produces the lowest one-step MAE and 100-step RMSD on all four
molecular systems.

\FloatBarrier
\subsection{Group III: Scientific Analysis of Learned World States}

This group tests factor coordinates as a scientific interface. The cancer case
uses them to nominate an intervention evaluated experimentally, while the
orbital case tests extracted modes against an established scaling law. Both
connect factor-level analysis of $u_k(x)$ to external validation.

\subsubsection{Wet-Lab Evaluation of a Factor-Nominated Intervention}
\label{sec:cancer-discovery}

Domain-specific intervention analysis applied to the orthogonal factor
coordinates learned by \method{} nominated IL-18 combined with
\emph{NT5E}/CD73 blockade as a candidate intervention. Wet-lab studies
subsequently supported this prediction, showing enhanced antitumor activity in
Huh7--PBMC co-culture, patient-derived organoids and tumor fragments, and
immunocompetent mice. This result illustrates how orthogonal factors learned by
\method{} can support the formulation of biologically testable hypotheses.

\begin{figure}[t]
\centering
\includegraphics[width=\textwidth]{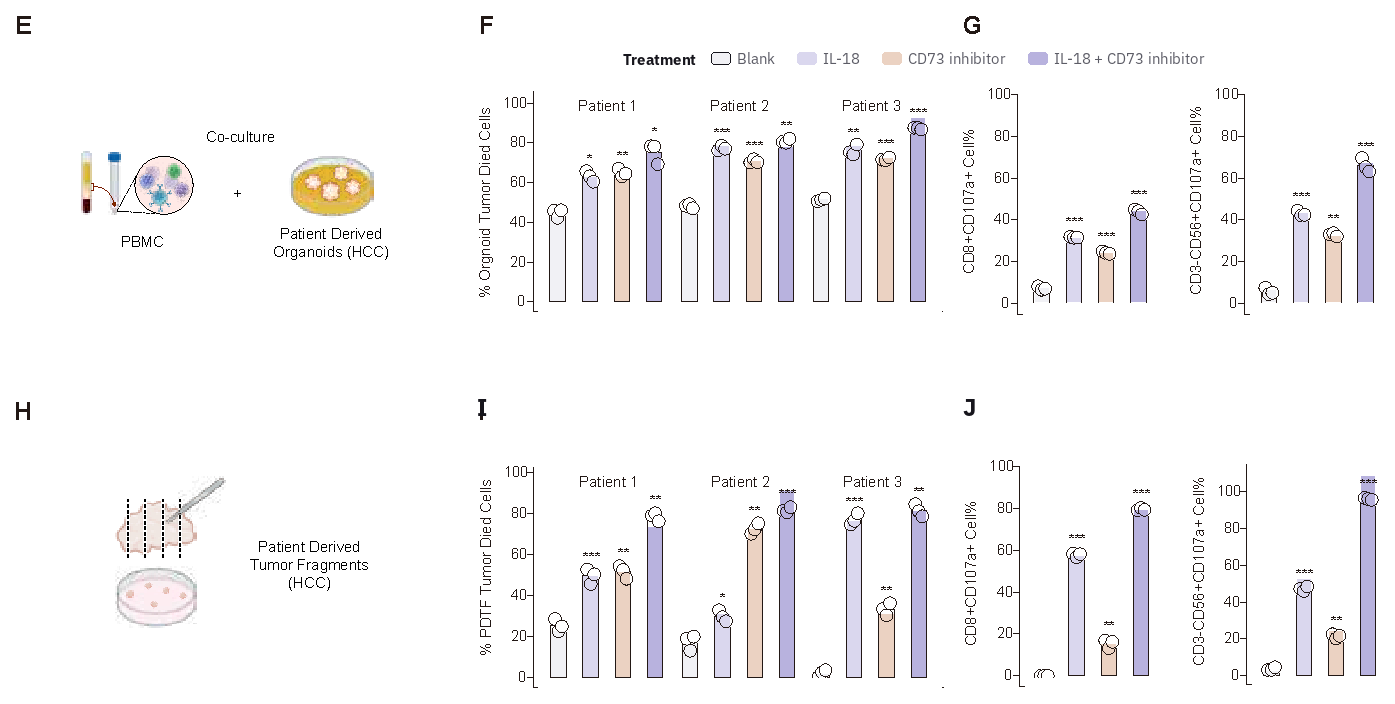}
\caption{Selected wet-lab results in patient-derived hepatocellular carcinoma
organoids and tumor fragments. Across three organoid specimens and three
tumor-fragment specimens, the IL-18 plus CD73 blockade combination showed the
strongest tumor-cell killing among the tested conditions, together with
increased immune-cell activation. Colors identify blank control, IL-18, CD73
inhibition, and the combined intervention.}
\label{fig:cancer_discovery}
\end{figure}

\FloatBarrier
\subsubsection{Orbital Scaling as a Physical-Law Diagnostic}
\label{sec:orbital}

Astronomical trajectories provide a complementary example in which predictive
latent modes are inspected for an interpretable law rather than used only for
forecasting. From simulated position--velocity trajectories without physical
labels, spectral analysis pairs latent frequencies with orbital semimajor axes.
In the \method{} analysis shown in Figure~\ref{fig:kepler}, the resulting modes
recover the Keplerian relation
$f\propto a^{-3/2}$ with high precision.

This example illustrates the scientific role of factor-level analysis: learned
predictive components can be evaluated against candidate symmetries,
invariants, or scaling relations. Here, agreement with the established
Keplerian law tests whether the orthogonal factors learned by \method{} capture
a physically meaningful dynamical regularity.

\begin{figure}[t]
\centering
\includegraphics[width=\textwidth]{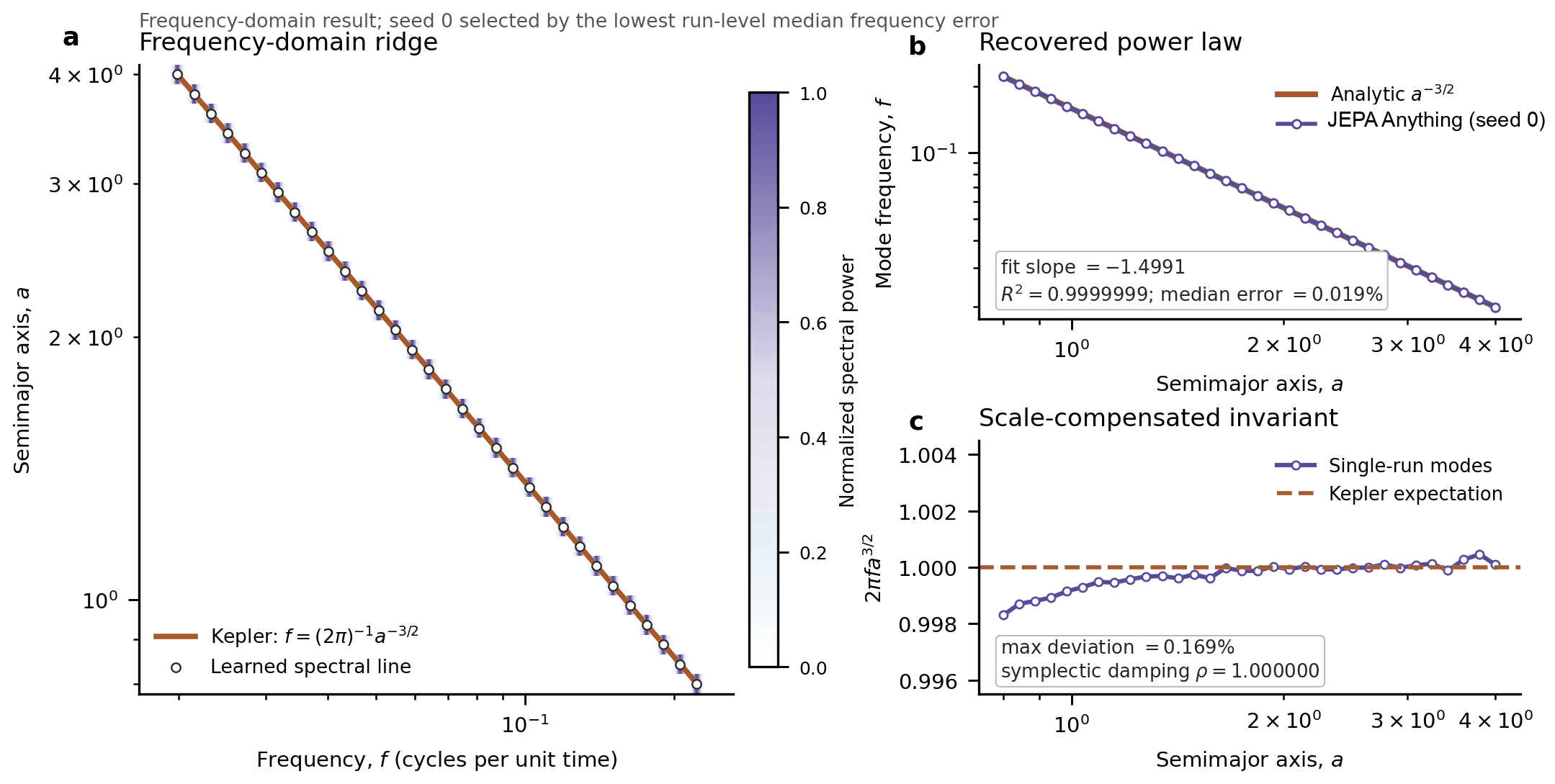}
\caption{A physical-law diagnostic based on predictive latent modes.
In a \method{} run, the learned spectral modes recover the Keplerian ridge
$f=(2\pi)^{-1}a^{-3/2}$ with fitted slope $-1.4991$ and $R^2=0.9999999$.}
\label{fig:kepler}
\end{figure}

\FloatBarrier
\section{Related Work}
\label{sec:related-work}

\paragraph{Latent world models and planning.}
Latent world models compress observations into states that can be advanced by
learned dynamics. Early recurrent world models coupled a latent dynamics model
to a controller \cite{ha2018worldmodels}, while DreamerV3 demonstrated that a
single model-based reinforcement-learning configuration can support behavior
learning across diverse domains \cite{hafner2025worldmodels}. Decoder-free
approaches such as TD-MPC2 optimize latent dynamics directly for scalable
continuous control \cite{hansen2024tdmpc2}. DINO-WM instead predicts features
from a pretrained visual encoder and plans toward visual goals from offline
trajectories \cite{zhou2025dinowm}. V-JEPA~2 combines large-scale
representation learning with an action-conditioned latent model for robotic
planning \cite{assran2025vjepa2}. These systems primarily develop latent states
for visual prediction or control.

\paragraph{Joint-embedding predictive learning.}
I-JEPA predicts representations of masked image regions from visible context
\cite{assran2023ijepa}, and V-JEPA extends feature prediction to video without
pixel reconstruction \cite{bardes2024vjepa}. Subsequent work scales this
principle to physical understanding and planning \cite{assran2025vjepa2}, while
Cell-JEPA adapts latent prediction to single-cell transcriptomics
\cite{elsheikh2026celljepa}. Analyses of JEPA objectives also show that their
learned invariances depend on which signals vary across context--target pairs
\cite{sobal2022slowfeatures}.

\paragraph{Structured and factorized representations.}
Factorized representation learning has commonly sought coordinates aligned
with independent generative variables, as in $\beta$-VAE
\cite{higgins2017betavae}, or entity-aligned slots, as in Slot Attention
\cite{locatello2020slotattention}. Structured world models similarly represent
states as interacting objects and relations \cite{kipf2020cswm}, while CITRIS
uses temporal interventions to identify causal factors
\cite{lippe2022citris}. Such semantic decompositions require corresponding
inductive biases or supervision; in general, unsupervised semantic
disentanglement is not identifiable from observations alone
\cite{locatello2019disentanglement}.

\paragraph{Redundancy reduction and collapse prevention.}
Barlow Twins reduces redundancy through cross-correlation matching
\cite{zbontar2021barlow}, and VICReg combines invariance with explicit variance
and covariance regularization \cite{bardes2022vicreg}. Across these lines, OPF
factorizes predictive target capacity without assuming object, causal, or
semantic factor identities. It applies orthogonality and activity constraints
to learned target-subspace bases and retains an explicit synthesis map, linking
redundancy reduction to a complete latent world state.

\section{Summary}
\label{sec:summary}

\method{} separates domain-specific predictive-state construction from a shared
OPF core. OPF assigns target capacity to complementary subspaces, keeps them
active, and synthesizes their predictions into a complete latent state. The
same latent world-state interface supports terminal encoder readout, recursive
transition and planning, or factor-level analysis.

Experiments evaluate these three uses across visual, biological, clinical,
control, molecular, physical-field, and weather systems. Terminal-readout tasks report stronger
visual, cellular, and disease-state metrics; CITRIS and the dynamics benchmarks
support intervention composition, OOD prediction, and repeated rollout. The
geometry analysis shows non-overlapping, unit-conditioned coordinates and
near-exact synthesis. Continuous-control means improve on two environments but
not Hopper, while factor interventions show that all locomotion channels
contribute to rollout. Biological and orbital cases further illustrate external
analysis of the learned coordinates. Together, the results support OPF as a
capacity-organizing core for a reusable latent world-state interface.

\section{Future Work}
\label{sec:future_work}

JEPA-Anything provides a common predictive learning principle across
different worlds. A broader direction is to turn these predictive models
into components of systems that actively investigate the worlds they
model. This connects naturally to Discovery Foundation Models
(DFMs)~\cite{yang2026discovery}, which frame discovery as a process of
formulating problems, constructing representations, proposing hypotheses,
and revising them through external evidence. Within this perspective,
JEPA-Anything could provide a predictive substrate for comparing
candidate interventions and identifying discrepancies between expected
and observed outcomes. Such discrepancies could guide new experiments
and, when necessary, motivate revisions to the variables, representations,
or hypotheses used to describe a system.

A concrete next step is to connect this predictive substrate with
ScienceIDE~\cite{geng2026scienceide} and
ScienceBuddy~\cite{xue2026sciencebuddy}.
ScienceIDE provides executable scientific environments in which agents
can run analyses, invoke simulations, and obtain verifiable feedback.
These environments could support the collection of action-conditioned
experience and the evaluation of decisions informed by learned world
models. ScienceBuddy provides an interactive research workspace and
couples harness evolution with model learning. Integrating predictive
models into this workflow would allow agents to translate researcher
questions into candidate experiments, use latent predictions to
prioritize them, and compare their outcomes with computational or
physical evidence. The resulting experience could inform subsequent
updates to the predictive model, agent policy, and research procedures.

Realizing this loop requires progress beyond predictive accuracy.
Key questions include how to calibrate uncertainty under distribution
shift, determine when a learned model is reliable enough to guide an
experiment, and revise predictive factors as new evidence reveals
missing structure. Orthogonal factors do not by themselves establish
causal mechanisms, and a shared predictive architecture does not
guarantee aligned representations across domains. Future evaluation
should therefore measure the quality and efficiency of experiment
selection, externally validated hypothesis revision, and transfer to
subsequent investigations. The long-term goal is a scientific learning
loop in which better world models enable more informative experiments,
and those experiments improve both the models and the agents that
use them.

\bibliography{references}

\clearpage
\appendix

\section{Approximate Pseudoinverse-Synthesis Stability}
\label{app:approximate-stability}

The exact orthogonal decomposition in
Proposition~\ref{prop:orthogonal-decomposition} has a continuous counterpart
when the learned analysis matrix is only approximately orthogonal and synthesis
uses the pseudoinverse in Equation~\eqref{eq:synthesis}.

\begin{proposition}[Approximate stability]
\label{prop:approximate-stability}
Suppose $P\in\R^{d\times d}$ satisfies
$\|P^\top P-I_d\|_2\leq\epsilon<1$. Then
\begin{equation}
 \sqrt{1-\epsilon}\leq\sigma_i(P)\leq\sqrt{1+\epsilon},
 \qquad
 \kappa_2(P)\leq\sqrt{\frac{1+\epsilon}{1-\epsilon}}.
 \label{eq:approx-condition}
\end{equation}
In particular, $P$ is invertible. For exact factor coordinates $u=P^\top z$
and predicted coordinates $\widehat u=u+e$, pseudoinverse synthesis obeys
\begin{equation}
 \widehat z=(P^\top)^\dagger\widehat u
 =z+(P^\top)^{-1}e,
 \qquad
 \|\widehat z-z\|_2
 \leq\frac{1}{\sqrt{1-\epsilon}}\|e\|_2.
 \label{eq:approx-synthesis}
\end{equation}
\end{proposition}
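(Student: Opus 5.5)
The plan is to reduce everything to the spectrum of the symmetric matrix $P^\top P$. Its eigenvalues are exactly $\sigma_i(P)^2$. Because $P^\top P-I_d$ is symmetric, its spectral norm equals the largest absolute value of its eigenvalues, namely $\max_i|\sigma_i(P)^2-1|$. The hypothesis $\|P^\top P-I_d\|_2\le\epsilon$ therefore gives $1-\epsilon\le\sigma_i(P)^2\le1+\epsilon$ for every $i$. Taking square roots yields the singular-value bounds in Equation~\eqref{eq:approx-condition}. Since $\epsilon<1$, the smallest singular value is strictly positive, so the square matrix $P$ is invertible. The condition-number bound then follows from $\kappa_2(P)=\sigma_{\max}(P)/\sigma_{\min}(P)$.

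For the synthesis statement, invertibility of $P$ gives invertibility of $P^\top$. For an invertible square matrix the Moore--Penrose pseudoinverse equals the ordinary inverse, so $(P^\top)^\dagger=(P^\top)^{-1}$. This can be checked directly against the four Penrose conditions, or taken as standard. Linearity then gives $\widehat z=(P^\top)^{-1}(P^\top z+e)=z+(P^\top)^{-1}e$, which is the identity in Equation~\eqref{eq:approx-synthesis}.

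The error bound is $\|(P^\top)^{-1}e\|_2\le\|(P^\top)^{-1}\|_2\|e\|_2$. The singular values of $P^\top$ coincide with those of $P$, and the largest singular value of the inverse is $1/\sigma_{\min}(P)$. Using $\sigma_{\min}(P)\ge\sqrt{1-\epsilon}$ from the first step gives the factor $1/\sqrt{1-\epsilon}$. As a sanity check, at $\epsilon=0$ this recovers the exact statement of the earlier corollary, $\|\widehat z-z\|_2=\|e\|_2$.

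No step is a serious obstacle. The only point needing care is the first one: the operator norm of $P^\top P-I_d$ must be tied to its eigenvalues. This works because the matrix is symmetric, so its spectral norm equals its spectral radius, and those eigenvalues are $\sigma_i^2-1$ through the eigendecomposition $P^\top P=V\Sigma^2V^\top$ obtained from the SVD $P=U\Sigma V^\top$. The identification of the pseudoinverse with the inverse, and the norm of the inverse, are routine once invertibility is established.
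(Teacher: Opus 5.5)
Your proof is correct and follows the paper's argument step for step. You place the eigenvalues of $P^\top P$ in $[1-\epsilon,1+\epsilon]$, deduce $\sigma_{\min}(P)\ge\sqrt{1-\epsilon}>0$ and hence $(P^\top)^\dagger=(P^\top)^{-1}$, then substitute $\widehat u=P^\top z+e$ and bound using $\|(P^\top)^{-1}\|_2=1/\sigma_{\min}(P)$; you also spell out a step the paper leaves implicit, that symmetry of $P^\top P-I_d$ makes its spectral norm equal $\max_i|\sigma_i(P)^2-1|$.
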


\begin{proof}
The eigenvalues of $P^\top P$ lie in
$[1-\epsilon,1+\epsilon]$, which gives
Equation~\eqref{eq:approx-condition} and implies
$\sigma_{\min}(P)\geq\sqrt{1-\epsilon}>0$. Hence $P$ is invertible and
$(P^\top)^\dagger=(P^\top)^{-1}$. Substituting
$\widehat u=P^\top z+e$ gives the identity in
Equation~\eqref{eq:approx-synthesis}; the bound follows from
$\|(P^\top)^{-1}\|_2=1/\sigma_{\min}(P)$.
\end{proof}

Thus pseudoinverse synthesis has no reconstruction bias for exact factor
coordinates, while the measured orthogonality residual controls amplification
of factor-prediction error.

\section{Supplementary Reproducibility Information}
\label{app:reproducibility}

\subsection{Implementation Details}

The APEBench Burgers and Kuramoto--Sivashinsky models use an MLP with a first
linear-layer weight of shape $[256,64]$ and OPF dimensions $d=128$, $K=4$, and
$r=32$. The Norman single-cell model uses $d=512$, $K=4$, and $r=128$, and
defines the prediction target as
$z_{\mathrm{response}}=z_{\mathrm{observed}}-z_{\mathrm{control}}$.

\subsection{OPF Coefficients}

Table~\ref{tab:app-opf-hyperparameters} reports the dimensions and three
additive OPF coefficients in Equation~\eqref{eq:objective} for the listed
experiments. Domain-specific terms remain part of
$\mathcal{L}_{\mathrm{base}}^{(\delta)}$. The dimensions agree with
Table~\ref{tab:experiment_overview}.

\begin{table}[H]
\centering
\caption{OPF dimensions and coefficients of the three additive regularizers in
Equation~\eqref{eq:objective}.}
\label{tab:app-opf-hyperparameters}
\scriptsize
\setlength{\tabcolsep}{2.5pt}
\renewcommand{\arraystretch}{0.96}
\begin{tabular}{@{}>{\raggedright\arraybackslash}p{0.43\linewidth}rrrrrr@{}}
\toprule
Experiment & $d$ & $K$ & $r$ & $\lambda_{\mathrm{orth}}$
& $\lambda_{\mathrm{fac}}$ & $\lambda_{\mathrm{enc}}$ \\
\midrule
CITRIS Interventional Pong & 160 & 5 & 32 & .10 & .05 & .02 \\
CausalWorld state/control & 128 & 4 & 32 & .10 & .05 & .02 \\
DMC pixel dynamics & 128 & 4 & 32 & .10 & .05 & .02 \\
DMC-VB held-out episodes & 128 & 4 & 32 & .10 & .05 & .02 \\
PDEBench Burgers & 128 & 4 & 32 & .10 & .05 & .02 \\
PDEBench reaction--diffusion & 128 & 4 & 32 & .10 & .05 & .02 \\
PDEBench shallow water & 128 & 4 & 32 & .10 & .05 & .02 \\
Synthetic PDE & 128 & 4 & 32 & .10 & .05 & .02 \\
Synthetic pixel dynamics & 128 & 4 & 32 & .10 & .05 & .02 \\
WeatherBench2 & 128 & 4 & 32 & .10 & .05 & .02 \\
APEBench Burgers / Kuramoto--Sivashinsky
& 128 & 4 & 32 & .10 & .05 & .02 \\
Hopper-v5, 20-step & 32 & 4 & 8 & .02 & .10 & .02 \\
Walker2d-v5, 20-step & 32 & 4 & 8 & .02 & .10 & .02 \\
HalfCheetah-v5, 20-step & 32 & 4 & 8 & .02 & .10 & .02 \\
\bottomrule
\end{tabular}
\end{table}

\subsection{AI Use Statement}

OpenAI Codex was used for language editing and translation; LaTeX
restructuring; consistency, bibliographic, and mathematical checks; cropping
and layout of manuscript figures; and feedback on experimental design,
statistical reporting, and the interpretation and presentation of results. The
authors reviewed all proposed changes and checked cited sources, derivations,
and experimental claims against the original materials. The authors made the
scientific decisions and take responsibility for the final manuscript.

\subsection{Ethics Statement}

The clinical experiment uses de-identified UK Biobank participant data accessed
through its controlled research and data-governance process. Analyses were
conducted within the scope of an approved research application and applicable
institutional ethics and data-use requirements. No participant-level data are
redistributed.

\subsection{Reproducibility Statement}

Section~\ref{sec:method} specifies the additive OPF loss, its combination with
the original implementation loss, the activity statistics, and the synthesis
map. Section~\ref{sec:experiments} identifies the data,
matched-baseline controls, and metrics, while
Table~\ref{tab:experiment_overview} records context--target construction,
factor dimensions, evaluation splits, and seed counts. Appendix
Table~\ref{tab:app-opf-hyperparameters} reports the OPF regularization
coefficients for the listed experiments. Experiments use the default
architecture and optimization settings of the cited backbone implementations,
with non-OPF settings shared within each matched comparison. Tables report
absolute metrics where scales are comparable, and figures state the aggregation
used for cross-task summaries.

\newpage
\section{Author Organizations and Correspondence}
\label{app:author-information}

\subsection*{Organizations}

\begin{description}[style=multiline,labelwidth=1.5em,leftmargin=2.2em,
  itemsep=0.35em,topsep=0.25em]
\item[\textsuperscript{1}] PhAI Labs.
\item[\textsuperscript{2}] The Chinese University of Hong Kong.
\item[\textsuperscript{3}] Fudan University.
\item[\textsuperscript{4}] City University of Hong Kong.
\item[\textsuperscript{5}] University of Bristol.
\item[\textsuperscript{6}] Stanford University.
\item[\textsuperscript{7}] University of Oxford.
\item[\textsuperscript{8}] Princeton University.
\end{description}

\subsection*{Correspondence}

\noindent *Correspondence: 
\texttt{pheng@cse.cuhk.edu.hk} (P.A.H.);
\texttt{yin@phai-labs.com} (Z.Y.);
\texttt{wuyc@phai-labs.com} (Y.W.); and
\texttt{yang@phai-labs.com} (L.Y.).

\end{document}